\newtheorem{theorem}{Theorem}
\newtheorem{lemma}{Lemma}
\newtheorem{proposition}{Proposition}
\newtheorem{remark}{Remark}
\newtheorem{assumption}{Assumption}
\newtheorem{definition}{Definition}
\newenvironment{proof}{\begin{trivlist}\item[]{\emph{Proof.}}}
               {\hfill$\Box$\end{trivlist}}
\DeclareMathOperator*{\argmin}{arg\,min}
\title{Advancing Local Clustering on Graphs via Compressive Sensing: Semi-supervised and Unsupervised Methods}
\author{%
  Zhaiming Shen \\
  School of Mathematics\\
  Georgia Institute of Technology\\
  Atlanta, GA 30332 \\
  \texttt{zshen49@gatech.edu} \\
  \And
  Sung Ha Kang \\
  School of Mathematics\\
  Georgia Institute of Technology \\
  Atlanta, GA 30332\\
\texttt{kang@math.gatech.edu} \\
}
\begin{document}

\maketitle

\begin{abstract}
 Local clustering aims to identify specific substructures within a large graph without any additional structural information of the graph. 
These substructures are typically small compared to the overall graph, enabling the problem to be approached by finding a sparse solution to a linear system associated with the graph Laplacian.
In this work, we first propose a method for identifying specific local clusters when very few labeled data are given, which we term semi-supervised local clustering. We then extend this approach to the unsupervised setting when no prior information on labels is available. The proposed methods involve randomly sampling the graph, applying diffusion through local cluster extraction, then examining the overlap among the results to find each cluster. We establish the co-membership conditions for any pair of nodes, and rigorously prove the correctness of our methods. Additionally, we conduct extensive experiments to demonstrate that the proposed methods achieve state of the art results in the low-label rates regime.
\end{abstract}

\section{Introduction}
The ability to learn from data by uncovering its underlying patterns and grouping it into distinct clusters based on latent similarities and differences is a central focus in machine learning.
Over the past few decades, traditional clustering problems have been extensively studied as an unsupervised learning task, leading to the development of a wide range of foundational algorithms, such as $k$-means clustering~\citep{MacQueen67}, density-based clustering~\citep{Ester96}, spectral clustering~\citep{Ng01,ZelnikManor04}, hierarchical clustering~\citep{Nielsen16} and regularized $k$-means \citep{kang2011regularized}. These foundational methods have, in turn, inspired numerous variants and adaptations tailored to specific data characteristics or application domains.

Researchers have also developed semi-supervised learning approach for clustering, which leverages both labeled and unlabeled data in various learning tasks. One of the most commonly used methods in graph-based semi-supervised learning is Laplace learning \citep{ZZL03}. Note that Laplacian learning sometimes is also called Label propagation \citep{ZG02}, which seeks a graph harmonic function that extends the labels. Laplacian learning and its variants have been extensively applied in semi-supervised learning tasks \citep{Zhou04a,Zhou04b,Zhou05,Ando07,kang2014supervised}. A key challenge with Laplacian learning type of algorithms is their poor performance in the low-label rates regime. To address this, recent approaches have explored $p$-Laplacian learning \citep{Alaoui16,Flores22}, higher order Laplacian regularization \citep{Zhou11}, weighted nonlocal Laplacians \citep{Shi17}, properly weighted Laplacian~\citep{Calder19}, and Poisson learning \citep{Calder20}.

Those aforementioned clustering algorithms are all global clustering algorithms, recovering all cluster structures simultaneously. However,  real-world applications often require identifying only specific substructures within large, complex networks. For example, in a social network, an individual may only be interested in connecting with others who share similar interests, while disregarding the rest of individuals. In such cases, global clustering methods become inefficient, as they generate excessive redundant information rather than focusing on the relevant local patterns.

In this paper, we turn our focus to a more flexible clustering method called \emph{local clustering} or \emph{local cluster extraction}. 
Local clustering focus only on finding one target cluster which contains those nodes of interest to us, and disregard the non-interest or background nodes. Researchers have investigated in this direction over the recent decades such as \citet{Lang04,Andersen06,Kloster14,SpielmanTeng2013,Andersen2016,Veldt19,Fountoulakis20}.  More recently, inspired by the idea of compressive sensing, \citet{LM20,LS23,SLL23} proposed a novel perspective for local clustering by finding the target cluster via solving a sparse solution to a linear system associated with the graph Laplacian matrix. 
\citet{shen2024sparse} provided a comprehensive summary of the compressive sensing based local clustering approaches. Such approaches can also be applied in medical imaging as demonstrated in \citet{hamel2024local}.
Intuitively speaking, as local clustering only focus on finding ``one cluster at a time", it is flexible in practice and can be more efficient and effective than global clustering algorithms.

Besides such merits that local clustering algorithms possesses, one big downside of it is the necessity of having the initial nodes (we call them seeds) of interest as prior knowledge, and it also requires a good estimate of the target cluster size. The seeds information is sometimes very limited and even unavailable, which makes local clustering algorithm less popular.  
We propose a clustering method which requires 
very few seeds (semi-supervised case) or no seeds (unsupervised case), see Figure~\ref{SSLC_explained} and \ref{USLC_explained} for illustration. 
We provide a detailed discussion of the procedure, with analysis on the correctness of the proposed method.  The main contributions of our work are as follows:

\begin{enumerate}
    \item We propose a semi-supervised (with very few labeled data) and an unsupervised (no labeled data) local clustering methods which outperform the state-of-the-arts local and non-local clustering methods in most cases.

    \item For theoretical considerations, we relax the assumption on the spectral norm of perturbation on the graph Laplacian and prove the correctness of our method in the semi-supervised case. We also establish the co-membership conditions for any pair of nodes, and then prove the correctness of our method in the unsupervised case. 

    \item Computationally the proposed method can also show benefits that our semi-supervised method can find all the clusters simultaneously which improves the ``one cluster at a time" feature of local clustering algorithms in terms of efficiency. We provide extensive experiments with comparisons to show the effectiveness of our methods in the low-label rates regime.
\end{enumerate}

The rest of the paper is organized as follows. 
In Section~\ref{secPrelim}, we review the necessary background and state the model assumptions for our tasks. In Section~\ref{sec:SSLC} and \ref{sec:USLC},
we present step-by-step procedure of the proposed methods in both semi-supervised and unsupervised settings respectively, and prove their correctness under certain assumptions. In section~\ref{secExp}, we show the experimental results of the proposed methods and compare them with the state-of-the-art semi-supervised clustering algorithms on various benchmark datasets. Finally, in Section~\ref{secConclusion}, we conclude the paper, discuss its limitations and societal impact.

\section{Preliminaries and Model Assumptions}
\label{secPrelim}

For a graph $G=(V, E)$, we use $V$ to denote the set of all nodes, and $E$ to denote the set of all edges. Suppose $G$ has $k$ non-overlapping underlying clusters $C_1, C_2, \cdots, C_k$, we use $n_i$ to denote the size of $C_i$ where $i=1,2,\cdots,k$, and use $n$ to denote the total size of graph $G$.
We use $A$ to denote the adjacency matrix (possibly weighted but non-negative) of graph $G$, and use $D$ to denote the diagonal matrix whose diagonal entries is the degree of the corresponding vertex. {In this paper, we focus on graphs without node features and define a cluster in terms of edge connectivity: a cluster is a subset of nodes that is densely connected internally and sparsely connected to the rest of the graph. We consider two settings for local clustering: semi-supervised and unsupervised. For semi-supervised case, we refer to seed(s) as a given initial set of nodes with known labels. For unsupervised case,  there is no seed(s) available.}

{For theoretical analysis purpose, we make the following assumptions on the graph model.




\begin{assumption} \label{assump1}
    The non-zero entries in the diagonal block is denser than the non-zero entries in the off-diagonal block (after permutation according to the cluster membership). More precisely, we assume the graph Laplacian satisfies $\|\Delta L\|_2=\|L-L^{in}\|_2 = o(1)$ and {$(\delta_n(L))^{\log n}=o(1)$} as the graph size $n\to\infty$. The definition of RIP constant $\delta_n$ is provided in Definition \ref{def:RIP} in Appendix \ref{secAppendixCS}.
\end{assumption}

\begin{assumption} \label{assump2}
The number of cluster $k=O(1)\geq 2$ and there are no overlapping between clusters.
    The size of each cluster is not too small nor too large, i.e., there exist $n_{\min}=\Theta(n)$ and $n_{\max}=\Theta(n)$ such that $n_{\min}\leq n_i\leq n_{\max}$ for any $i\geq 1$.
\end{assumption}

\begin{assumption} \label{assump3}
For any distinct $i,j\in C_s$ with some $s\geq 1$, let $K_{i,j}\subset C_s$ be the set of nodes such that given any node in $K_{i,j}$ as the seed, LCE always finds node $i$ and $j$ into the same local cluster. We assume the cardinality of $K_{i,j}$ is at least $(1-o(1))\frac{n^2_{\min}}{n}$ for any distinct pair of nodes $i,j$. 
\end{assumption}

Throughout the paper, all the notation $o(1)$ is with respect to the size of graph $n\to \infty$.

Assumption \ref{assump1} guarantees that the graph exhibits a block structure, ensuring that any density-based clustering algorithm has the potential to succeed as in 
\citet{LM20,LS23,SLL23}. Assumption \ref{assump2} ensures that there are no dominating or dominated clusters, yet allows for outliers, 
for example, nodes lying outside all clusters are permitted. 
Assumption \ref{assump3} is essentially a type of homogeneity assumption on each cluster of the graph in order to guarantee the performance of LCE (Algorithm \ref{alg1}) on a randomly chosen node. 

In this work, we pursue the idea of using compressive sensing for local clustering tasks. More details of the connection between compressive sensing and local clustering are provided in Appendix \ref{secAppendixLCE} and \ref{secAppendixCS}.  One notable theoretical contribution in this work is that we are able to relax the assumption from $\|\Delta L\|_2=o(n^{-1/2})$ (see Lemma \ref{IndAnaLCE} in Appendix \ref{secAppendixLCE}) \citep{SLL23} to $\|\Delta L\|_2=o(1)$  by choosing the parameters more carefully and enforcing tighter bounds on the inequalities (see Lemma \ref{IndAna} in Appendix \ref{secAppendixOtherLemmas}). 



\section{Semi-supervised Local Clustering} \label{sec:SSLC}

For the semi-supervised case, we consider a graph $G=(V,E)$ with non-overlapping underlying cluster $C_1, \cdots, C_k$. We are interested in the following two local clustering setups:
\begin{enumerate}
    \item Suppose a very small portion of seeds $\Gamma_1 \subset C_1$ is available, we would like to find all the nodes in the target cluster $C_1$.
    \item Suppose a very small portion of seeds $\Gamma_i\subset C_i$ is available for each $i=1,\cdots,k$, we would like to assign all nodes to their corresponding underlying clusters.
\end{enumerate}

Similar to the issue of Laplacian learning type of algorithms, local clustering approach such as LCE (see Algorithm \ref{alg1}) becomes less effective in the low-label rates regime. Therefore one wants to extract more seeds from each cluster before applying LCE. 
We illustrate the idea in Figure~\ref{SSLC_explained}. We assume our target cluster is $C_1$, which is the cluster in the left of those three clusters illustrated in the first subplot of Figure~\ref{SSLC_explained}. 

\begin{figure}
    \centering
\includegraphics[width=0.75\linewidth]{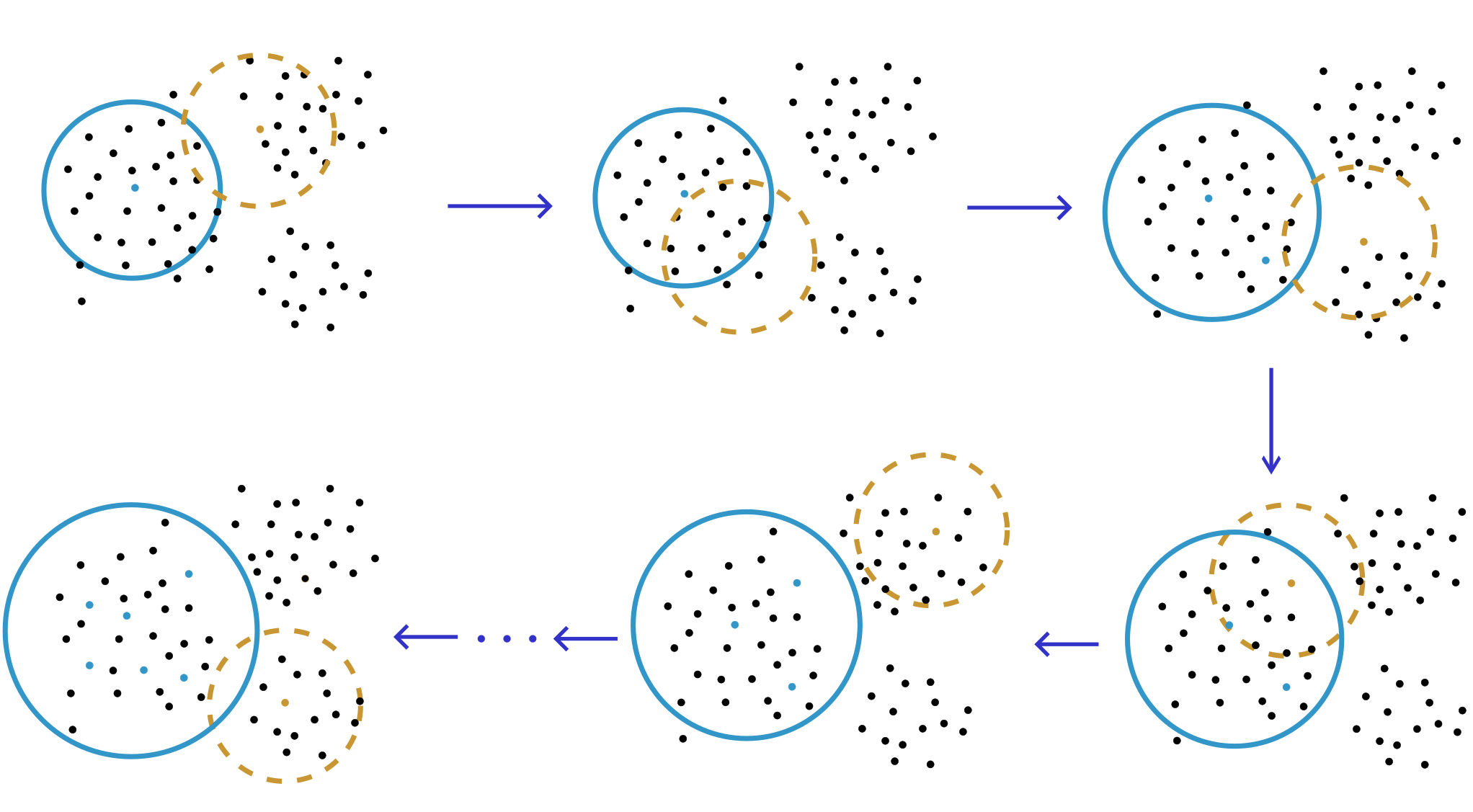}
\caption{Illustration of semi-supervised local clustering (SSLC) for a single cluster. Each subplot indicates one iteration (Blue dots: seeds in $\Gamma_1$. Brown dots: randomly sampled node in each iteration).}
\label{SSLC_explained}
\end{figure}

The procedure of semi-supervised local clustering (SSLC) is summarized as follows:

\begin{enumerate}
    \item Given a graph $G$, and initial seed(s) $\Gamma_1$ (indicated as the blue point in the first subplot), we first apply LCE to have a rough estimate $\tilde{C}_1$ (indicated by the solid blue circle) of $C_1$. 
    \item Randomly sample a node $v$ (the brown point in the first subplot) in $G$ and apply LCE to get a rough cluster around $v$ (indicated by the dashed brown circle). Then check the overlap between solid blue and dashed brown circle. If the overlap contains the majority of the nodes in the brown circle, then add $v$ into $\Gamma_1$, otherwise sample another node.
    \item Continue this process and keep increasing the size of $\Gamma_1$ until a predetermined iteration number is reached (indicated in the process of from the second to the last subplots). 
\end{enumerate}

Note that this procedure can be applied when more than one cluster is of interest without increasing the number of iterations. The procedures for the single cluster case with $\Gamma_1$ and $C_1$ and multiple clusters case $\Gamma_s$ and $C_s$ are summarized in Algorithm~\ref{alg3} and \ref{alg6} respectively. This method finds 
more nodes which can be added into the initial set of seeds, and the user can flexibly determine the number of seeds wanted by increasing or decreasing the number of iterations. Theorem \ref{thmSSLC} essentially establishes the correctness of Algorithm~\ref{alg3} and \ref{alg6}.

\begin{algorithm}
\caption{Semi-supervised Local Clustering (SSLC) for a Single Cluster \label{alg3}}
\begin{algorithmic}[1]
\REQUIRE The adjacency matrix $A$ of an underlying graph $G$, the initial seed(s) $\Gamma_1$ for the target cluster $C_1$, the estimated size ${n}_1$, the number of resampling iteration $\ell$
\ENSURE  desired output cluster $C_1^{\#}$
\STATE{$\tilde{C}_1 \leftarrow{\rm LCE}(A,{n}_1,\Gamma_1)$}
\FOR{$i=1:\ell$}
\STATE{$v\leftarrow$ uniformly random sampled node from $G$}
\STATE{$C^{\#} \leftarrow{\rm LCE}(A,{n}_1,v)$}
\IF{$|\tilde{C}_1\cap C^{\#}| > 0.5|C^{\#}|$}
\STATE{$\Gamma_1\leftarrow \Gamma_1\cup\{v\}$}
\STATE{$\tilde{C}_1 \leftarrow{\rm LCE}(A,{n}_1,\Gamma_1)$}
\ENDIF
\ENDFOR
\STATE{$C_1^{\#} \leftarrow \tilde{C}_1$ }
\end{algorithmic}
\end{algorithm}

\begin{theorem} \label{thmSSLC}
Suppose $G$ satisfies Assumptions~\ref{assump1} - \ref{assump3}. Then when $n$ (the size of $G$) gets large, for each $s=1,\cdots,k$, we have
\[ \begin{cases} |\tilde{C}_s\cap C^\#| 
    > (1-o(1))|C^\#|, & \text{if}\text{ } v\in C_s, \\
    |\tilde{C}_s\cap C^\#|
      \leq o(|C^\#|), & \text{otherwise.} \\
   \end{cases}
\]
\end{theorem}

Theorem \ref{thmSSLC} 
    shows that whenever a node $v$ being added to $\Gamma_1 (\Gamma_s)$ based on the large overlap criterion, it satisfies $v\in C_1 (C_s)$. This means that $\Gamma_1 (\Gamma_s)$ only grow with the nodes within $C_1 (C_s)$, which is the essential step to guarantee the correctness of Algorithm \ref{alg3} and Algorithm \ref{alg6}.

%


\section{Unsupervised Local Clustering} \label{sec:USLC}

For unsupervised case, we consider a graph $G=(V,E)$ with non-overlapping underlying cluster $C_1, C_2, \cdots$.
Assume neither the prior information about seeds nor the number of clusters is given, the goal is to assign all the nodes to their corresponding underlying clusters.

In this case, we randomly sample and build a local cluster from the sampled node every time.  We check its overlap with the local cluster obtained from any newly sampled node.  After a certain number of iterations, the nodes from the same underlying cluster are more likely to be clustered together. We  illustrate the idea of unsupervised local clustering in Figure~\ref{USLC_explained}. The procedure of unsupervised local clustering (USLC) is summarized as follows:

\begin{enumerate}
    \item Given a graph $G$, in each iteration, we randomly sample a node and find its local cluster via LCE (as shown in the top row of Figure~\ref{USLC_explained}). 
    
    \item 
    Based on the found local cluster, build the co-membership matrix in such a way that it outputs $1$ in the location $(i,j)$ when both $i,j$ are contained in that found cluster, and outputs $0$ otherwise.

    \item Aggregate the clustering results from all iteration into a co-membership matrix $M$ (roughly speaking, each entry $M_{i,j}$ represents the probability of a pair of nodes being clustered into the same local cluster).  The values in each block of matrix $M$ asymptotically converges  to the same value, as the iteration number goes up (as shown in the bottom row of Figure~\ref{USLC_explained}).

    \item Randomly sample a node $i$, and use a prescribed cutoff number $\delta$ to select all the nodes $u$ such that $M_{i,j}>\delta$ and assign those $j$'s as one of the clusters $C^\#$.

    \item Delete the subgraph generated by $C^\#$ from $G$. Then keep iterating until the prescribed maximum number of clusters is reached or the remaining size of graph is too small.
\end{enumerate}

\begin{figure}
    \centering
    \begin{tabular}{c}
\includegraphics[width=0.9\linewidth]{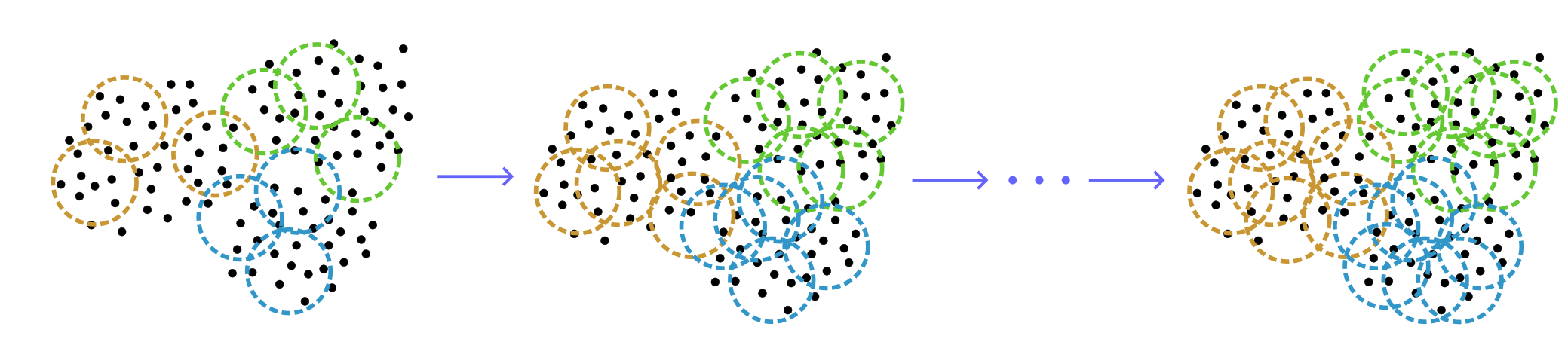} \\
\hspace{-10mm}
\includegraphics[width=0.95\linewidth]{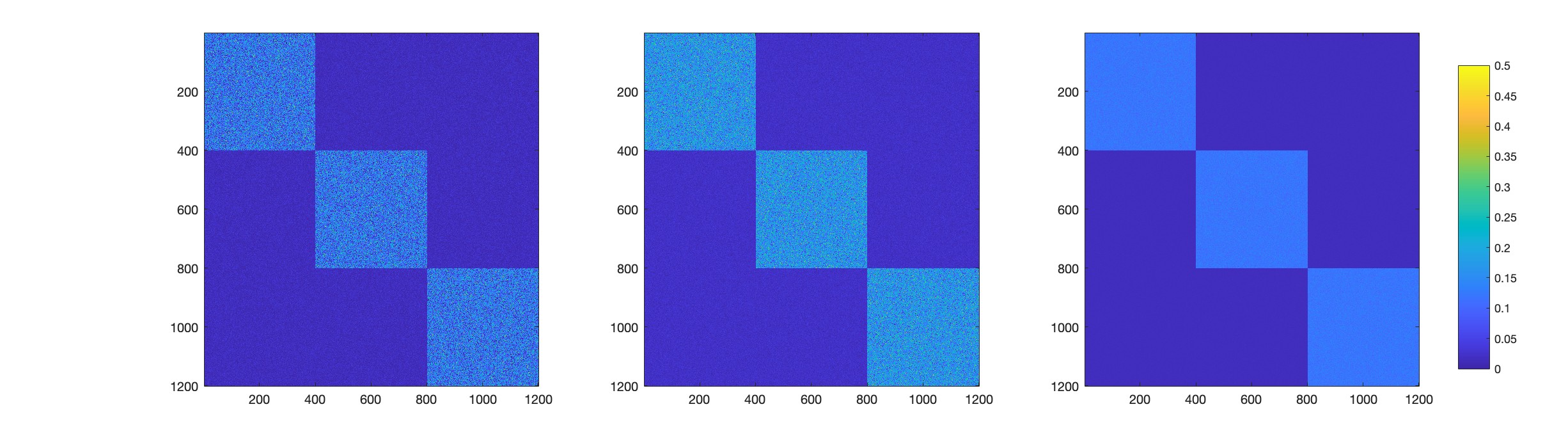} 
    \end{tabular} 
    \vspace{-2mm}
\caption{Illustration of USLC procedure. Top row: each dashed circle represents one iteration of LCE generated from a randomly sampled node, different colors indicate local clusters generated from nodes of different underlying clusters. Bottom row: Aggregated co-membership matrix.}
    \label{USLC_explained}
\end{figure}

We summarize its procedure in Algorithm \ref{alg5}. 
We first show in Proposition \ref{Mmatrix} that $M_{i,j}$, the probability of a pair of nodes coming from the same underlying cluster, is significantly different from the probability of a pair of nodes coming from two different underlying clusters. Then we show in Theorem \ref{thmUSLC} the correctness of Algorithm \ref{alg5}. The detailed proofs are deferred to Appendix \ref{secAppendixOtherLemmas}. In Step $4$ 
above, we take a thresholding $\delta$, since the values is between $0$ and $1$ after taking the average in Step $3$.  This thresholding value is given in the proof of Theorem \ref{thmUSLC} in Appendix \ref{secAppendixOtherLemmas}.

\begin{algorithm}
\caption{Unsupervised Local Clustering (USLC) \label{alg5}}
\begin{algorithmic}[1]
\REQUIRE The adjacency matrix $A$ of an underlying graph $G$, the number of resampling iteration $\ell$
\ENSURE  The desired clusters $C_1^{\#},C_2^{\#},\cdots$
\STATE{initialize $s\leftarrow 1$}
\WHILE{$|G|<n_{\min}$}
\STATE{initialize the comembership matrix $M$ as zero matrix}
\FOR{$i=1:\ell$}
\STATE{$v\leftarrow$ uniformly random sampled node from $G$}
\STATE{$C^{\#} \leftarrow{\rm LCE}(A,n_{\min},v)$}
\STATE{$M\leftarrow M +\frac{1}{\ell}\cdot{\rm comembership}(\mathbf{1}_{C^\#})$ }
\ENDFOR
\STATE{select a node $v$ uniformly random such that $M_{v,u}>\delta$ for some $u\neq v$, with some appropriate $\delta\in (0,1)$, then find all the $j$ such that $C_s^\# := \{j:M_{v,j}>\delta$\}}
\STATE{Let $G^{(s)}$ be the subgraph spanned by $C_s^\#$}
\STATE{$G\leftarrow G\setminus G^{(s)}$}
\STATE{$s\leftarrow s+1$}
\ENDWHILE
\end{algorithmic}
\end{algorithm}


\begin{proposition} \label{Mmatrix}
Suppose $G$ satisfies Assumptions~\ref{assump1} - \ref{assump3}.
Then, as $n$ gets large, the co-membership matrix $M$ obtained from Algorithm~\ref{alg5}  has a clear block diagonal form (after permutation) as $L\to\infty$. More precisely, the entries in $M$ satisfy
\[\begin{cases} 
    M_{i,j} \geq (1-o(1))\frac{n_{\min}}{n}, & \text{for any}\text{ } i=j, \\
    M_{i,j}  \geq (1-o(1))\frac{n_{\min}^2}{n^2}, & \text{for}\text{ } i,j\in C_s, i\neq j, s\geq 1, \\
     M_{i,j}  \leq o\left(\frac{4n^2_{\min}(1-o(1))}{n^2}\right), & \text{otherwise.}    
   \end{cases}
\]

\end{proposition}

\begin{theorem} \label{thmUSLC}
Suppose the assumptions in Theorem~\ref{Mmatrix} hold. For any $v\in C_s, s\geq 1$, there exists some $\delta\in (0,1)$ and $C_s^{\#}=\{i:M_{v,i}>\delta\}$ such that $C_s^\#=C_s$. Consequently, Algorithm~\ref{alg5} assigns all nodes into their clusters correctly.
\end{theorem}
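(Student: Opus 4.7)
The plan is to read off the statement directly from the quantitative bounds in Lemma~\ref{Mmatrix}. The lemma gives a clean three-case dichotomy for the entries of $M$, so the task reduces to checking that there is a nonempty interval of admissible thresholds $\delta$ separating the within-cluster entries from the between-cluster entries, uniformly in the choice of sampled node $v$ and in the cluster index $s$.

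First, I would invoke Assumption~\ref{assump2} to replace the fractions $n_{\min}/n$ and $n_{\min}/n_s$ by positive constants. Concretely, since $n_{\min}=\Theta(n)$ and $n_s\leq n_{\max}=\Theta(n)$, there exists a constant $\alpha>0$ (independent of $n$) such that $n_{\min}^2/(n\,n_s)\geq \alpha$ for all $s$. Combined with Lemma~\ref{Mmatrix}, this yields, for any $v\in C_s$ and $j\in C_s$ with $j\neq v$, the lower bound $M_{v,j}\geq (1-o(1))\alpha$, while the diagonal entry $M_{v,v}$ is of the same order $\Theta(1)$. For $j\notin C_s$, the same lemma gives $M_{v,j}\leq o(1)$. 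Thus the within-cluster entries are bounded below by a positive constant while the between-cluster entries tend to zero, and for $n$ large enough there is a definite gap.

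Given this gap, I would simply pick $\delta=\alpha/2$ (or any constant in $(0,\alpha)$; one could also state the result as ``there exists $\delta\in(0,1)$ such that for all sufficiently large $n$\dots''). Then for each $j\in C_s$, $M_{v,j}\geq (1-o(1))\alpha>\delta$, so $j\in C_s^{\#}$; for each $j\notin C_s$, $M_{v,j}\leq o(1)<\delta$, so $j\notin C_s^{\#}$. Hence $C_s^{\#}=C_s$, which establishes the first assertion of the theorem.

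For the second assertion (correctness of the full Algorithm~\ref{alg5}) I would argue by induction on the outer \textbf{while} loop iteration counter $s$. At iteration $s=1$, the above argument shows that the extracted block $C_1^{\#}$ coincides with the underlying cluster containing the randomly chosen anchor $v$. After removing the subgraph spanned by $C_1^{\#}$, the residual graph consists of the remaining underlying clusters, which by Assumptions~\ref{assump1}--\ref{assump3} still satisfy the hypotheses of Lemma~\ref{Mmatrix} (the constant $\alpha$ may shrink but remains bounded away from $0$ as long as $\Theta(1)$ clusters remain, using Assumption~\ref{assump2}). Applying the single-step argument to the residual graph gives the next cluster exactly, and iterating until the termination condition (fewer than $K$ rounds or residual size $<n_{\min}$) recovers all clusters. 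The main obstacle I anticipate is not any hard inequality but rather making the inductive step clean: one must verify that after deleting a correctly identified cluster the remaining subgraph still satisfies the conditions on $\|\Delta L\|_2$ and on cluster regularity used in Lemma~\ref{Mmatrix}, so that the constant $\alpha$ (and hence $\delta$) can be chosen once and for all at the start, independent of the iteration index $s$.
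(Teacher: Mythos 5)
Your proposal is correct and follows essentially the same route as the paper: both read the separation between within-cluster and between-cluster entries directly off the bounds in Lemma~\ref{Mmatrix} and choose a threshold $\delta$ in the resulting gap (the paper takes $\delta=(n_{\min}/n)^2$ where you take a constant $\alpha/2$; under Assumption~\ref{assump2} these are of the same order, so the choice is immaterial). Your extra induction over the while-loop, checking that the residual graph still satisfies the hypotheses after a recovered cluster is deleted, goes beyond the paper's proof, which stops at $C_s^{\#}=C_s$ and does not address the iterated deletion.
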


It is worth of noting that Algorithms \ref{alg3}, \ref{alg5} and \ref{alg6} are applicable if there are outlier nodes, i.e., nodes which do not belong to any underlying clusters, presented in the graph. In such case, we extract all the clusters, and the remaining unclustered nodes are automatically classified as outlier nodes. 

\section{Experiments} \label{secExp}

We evaluate the performance of SSLC and USLC on both synthetic and real datasets, with a particular focus on clustering in the low-label rates regime (with label rates at most $1\%$). Additionally, we demonstrate the robustness of our methods by manually introducing outliers into the dataset and assessing their impact. All experiments are conducted on a local machine with 8-core Ryzen 7 7700X CPUs and 24 GB of RAM capacity. The runtime of SSLC/USLC is primarily determined by the LCE procedure, which is $O(nd_{\max}\log(n))$. In the regime $d_{\max}=O(\log n)$, this becomes $O(n\log^2 n)$. Therefore, if there are total $k$ number of clusters, the total run time is $O(kn\log^2 n)$. Samples of demo code is available at \url{https://github.com/zzzzms/Iterative_LCE}.

\begin{figure}
    \centering   
    \begin{tabular}{ccc}  
    \includegraphics[width=0.31\linewidth]{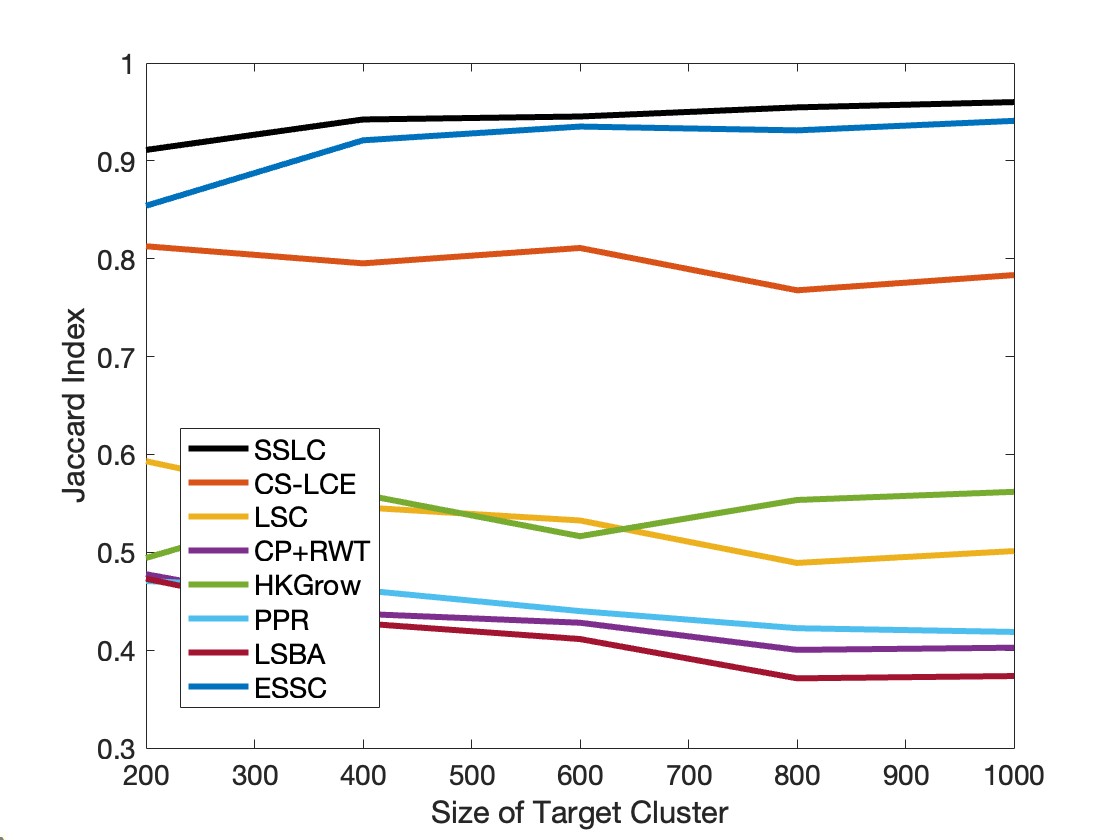} & \includegraphics[width=0.31\linewidth]{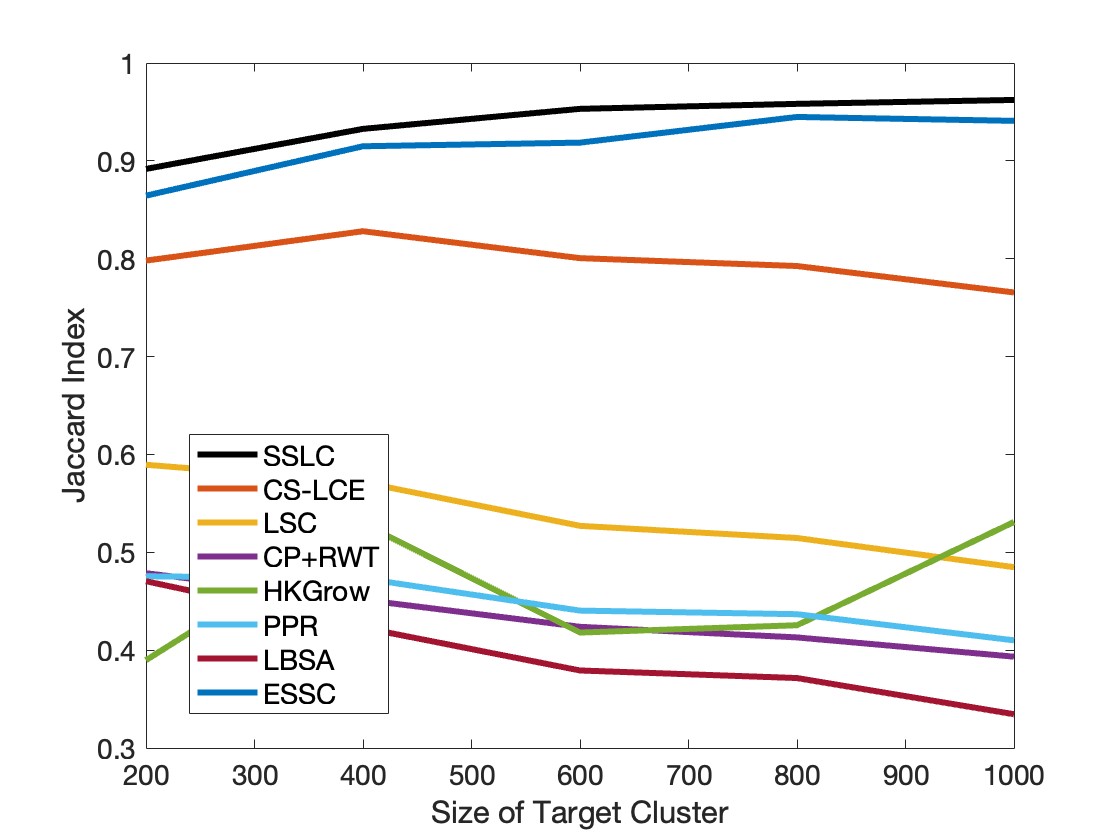} & \includegraphics[width=0.31\linewidth]{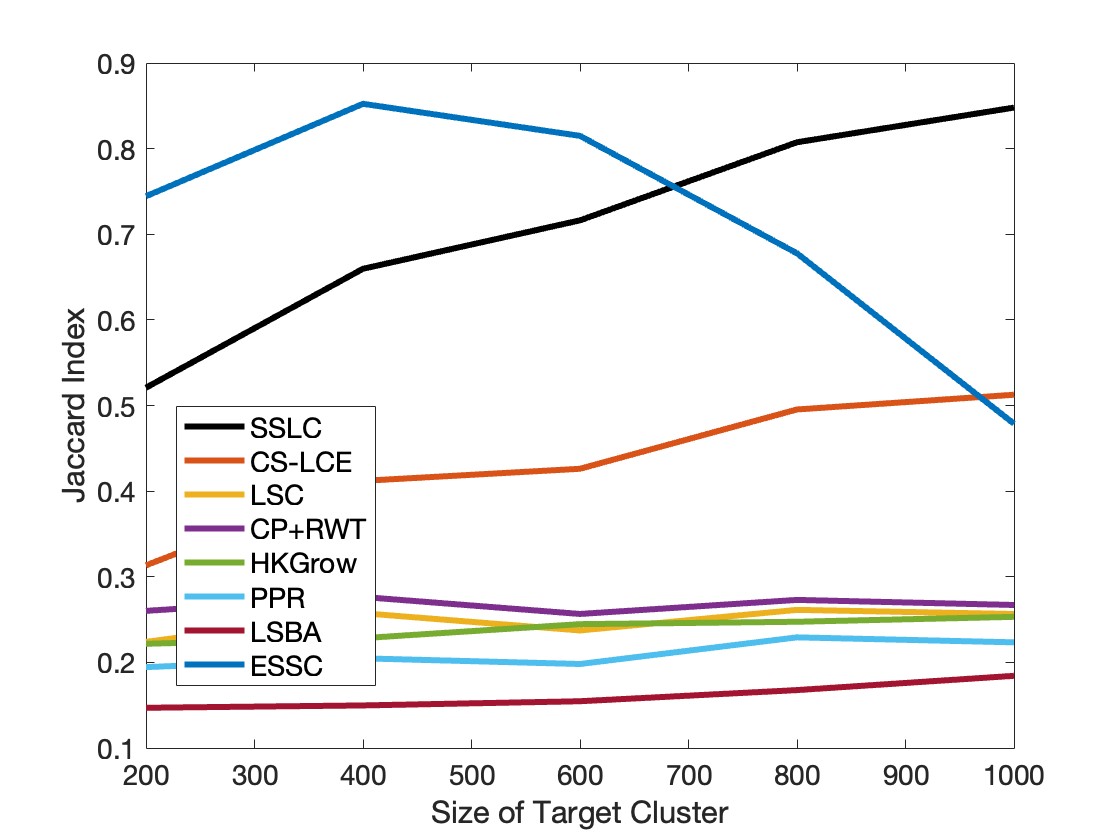} \\
    \includegraphics[width=0.31\linewidth]{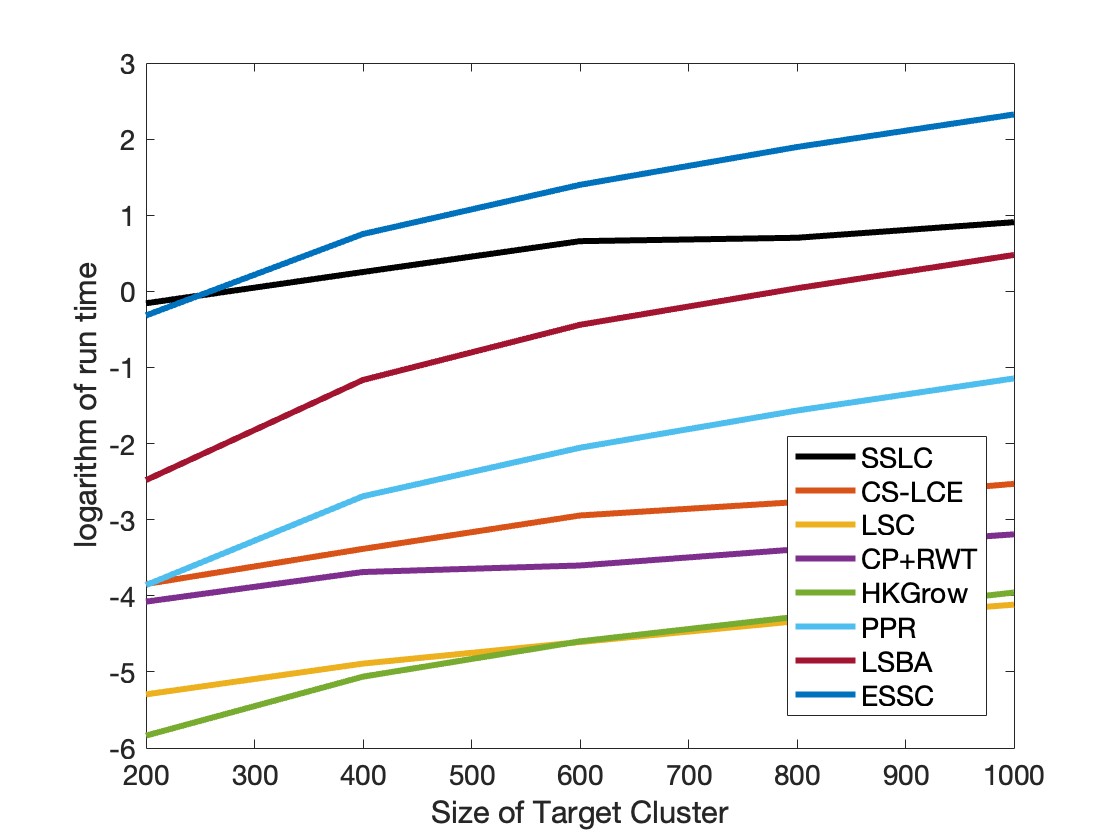}  & \includegraphics[width=0.31\linewidth]{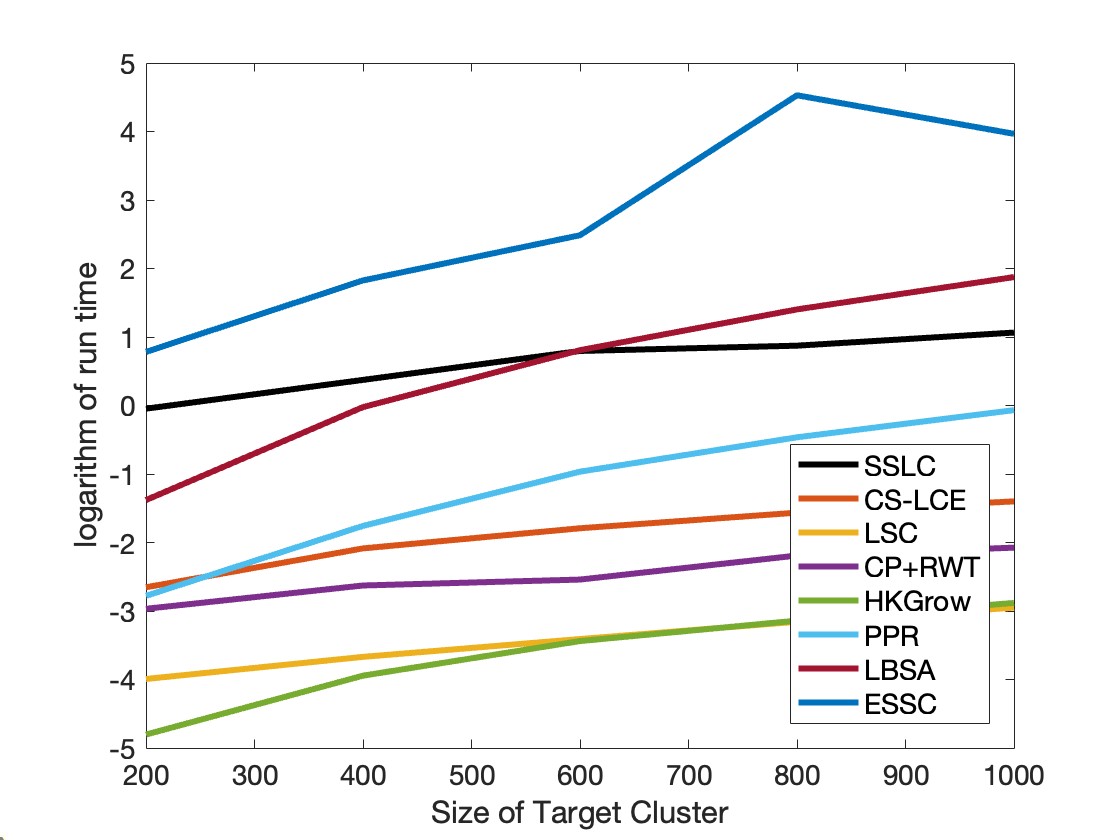} &
 \includegraphics[width=0.31\linewidth]{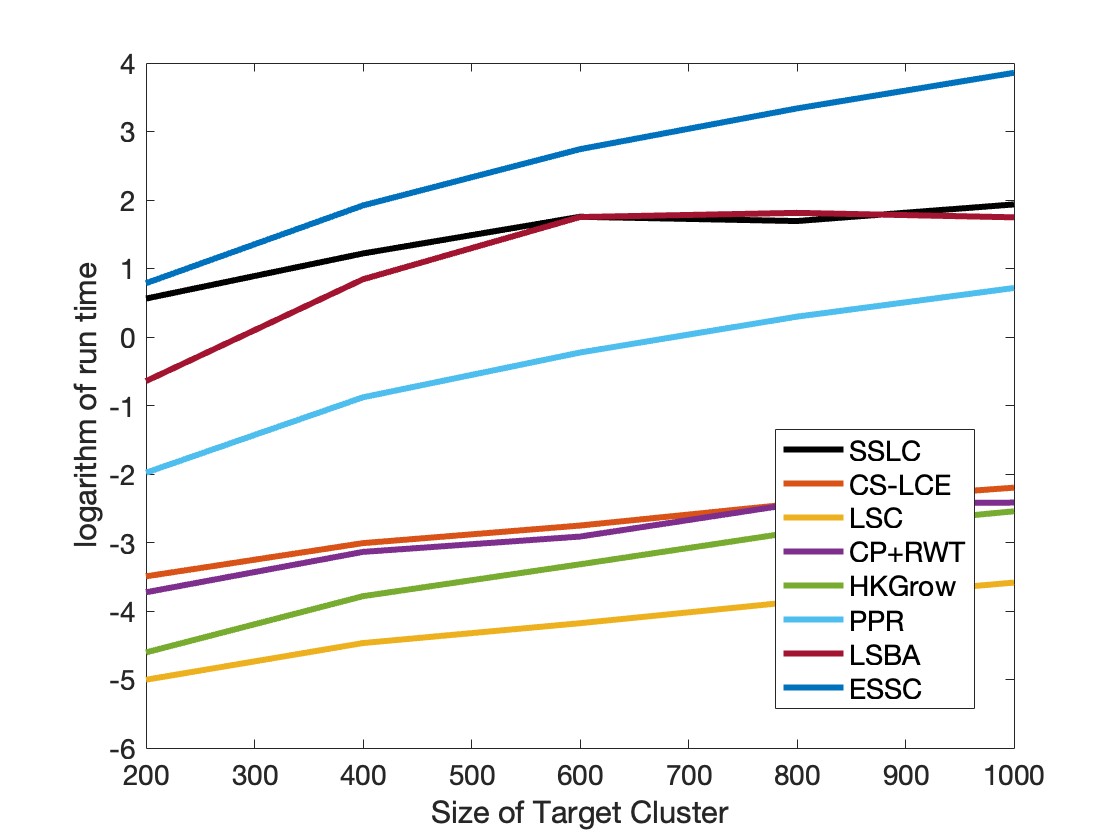} 
    \end{tabular}
    \vspace{-2mm}
\caption{Plots of Jaccard index (over 100 trials) and logarithm of running time of SSLC for stochastic block model. 
(a) The first column shows experiment with three equal cluster size and single cluster $C_1$ extraction.  (b) The middle column shows experiment with three equal cluster size and all clusters $C_s$ extraction. (c) The right column  shows experiment with unequal cluster sizes, focusing on the most dominent cluster. 
}
\label{figSBM}
\end{figure}
\paragraph{Synthetic Data}
We first conduct two experiments for  semi-supervised setting on stochastic block model (SBM) with different cluster sizes and different connection probabilities in Figure~\ref{figSBM}. The first cases (a) and (b) are symmetric SBM with three equal cluster size, the second case (c) is general SBM with three unequal clusters sizes. 
For the symmetric case, we consider both the single cluster extraction in (a) and all clusters extraction in (b).  For the nonsymmetric case in (c), we only focus on extracting the most dominant cluster, i.e., the cluster with the largest connection probability.  The parameters for generating data in both cases are presented in Appendix~\ref{secAppendixImplementation}.
We compare against several other local clustering algorithms such as CS-LCE or LCE~\citep{SLL23}, LSC~\citep{LS23}, CP+RWT~\citep{LM20}, HKGrow~\citep{Kloster14}, PPR~\citep{Andersen06}, ESSC~\citep{Wilson14}, and LBSA~\citep{Shi19}.  

For the experiments on SBM, we set the number of initial seeds to be 1. 
We observe that SSLC has a better Jaccard index compared to other methods in most cases. Moreover, for symmetric model, the Jaccard index in both  single cluster extraction and all clusters extraction are similar, while the running time of all clusters extraction is more advantageous towards SSLC (see also Remark~\ref{rmktime4SSLC}). 

To further validate that our assumption $\|\Delta L\|_2=\|L-L^{in}\|_2=o(1)$, we conduct experiments on symmetric stochastic block model by fixing the number of nodes, number of clusters, intra-connection probability, while varying the inter-connection probability. The results are provided in Table \ref{DeltaLTable}. We observe that the $\|\Delta L\|_2$ decreases as $n$ increases, which is as desired.

    Our experimental results are also connected to the theoretical results in the SBM literature. Note that when the signal-to-noise ratio ${\rm SNR} = \frac{(a-b)^2}{k(a+(k-1)b)}>1$ (Kestem-Stigum threshold) in the SBM literature gives a theoretical bound for weak recovery of SBM in the sparse regime (with connectivity $\frac{a}{n}$ and $\frac{b}{n}$). For our exact recovery case (with connectivity $\frac{a\log n}{n}$ and $\frac{b\log n}{n}$), if we fix $a,b,k$ and increase $n$, then $\|\Delta L\|$ decreases (illustrated in Table \ref{DeltaLTable}). In our experiments, we set $k=3, p=\frac{a\log n}{n}, q=\frac{b\log n}{n}$ with $a=6$ and $b=1$, then we roughly satisfies the condition $\sqrt{a}-\sqrt{b}\approx\sqrt{k}$ for exact recovery, which is given in Theorem 14 and Remark 16 in \citet{abbe2018community}.

Besides SBM, we compare our algorithm against other compressive sensing based local clustering methods on geometric dataset consisting of different shapes (three lines, three circles, three moons). The 2D projection and more details of this dataset are provided in Appendix~\ref{secAppendixGeo}. Note that spectral clustering often fails on these datasets when the noise level is large. However, compressive sensing based approaches work better.
We use $1$ label per class, the clustering results are shown in Table~\ref{GeoTable}. We observe that, on all three datasets, Lines, Circles, and Moons, our method SSLC consistently achieves high accuracy, and it outperforms other compressive sensing based local clustering methods, especially CP+RWT and LSC.

\begin{table}
\centering
\caption{Average accuracy and standard deviation (\%) on geometric data over 100 trials}
\label{GeoTable}
	\begin{tabular}{l|ccc}
	\toprule
           Datasets & 3 Lines & 3 Circles & 3 Moons  \\
    \midrule
    CP+RWT & 82.1 \text{ }(9.1) & 96.1 \text{ }(5.1) & 85.4 \text{ }(1.3)  \\
	LSC & 89.0 \text{ }(5.5) & 96.2 \text{ }(3.7) & 85.3 \text{ }(1.9) \\
	
 LCE & 92.4 \text{ }(8.1) & 97.6 \text{ }(4.7) & 96.8 \text{ }(0.9)  \\
    
    SSLC (ours) & \textbf{94.8\text{ }(7.2)} & \textbf{98.2 \text{ }(4.1)} & \textbf{97.3\text{ }(1.2)}  \\
	\bottomrule
	\end{tabular}
\end{table}

\paragraph{Real Data} 
For real datasets, we focus on clustering in the low-label rates regime on FashionMNIST~\citep{Xiao17}, CIFAR-10~\citep{Krizhevsky09}, and Planetoid (Cora, CiteSeer, PubMed) \citep{Yang2016}. To apply graph-based clustering algorithms on images, we first construct an auxiliary K-NN graph, and compute the pairwise distance from Gaussian kernel based on the Euclidean distance of the latent features with some scaling factors. Similar constructions have also appeared among others \citep{ZelnikManor04,Jacobs18,Calder20}. More detailed construction is provided in Appendix~\ref{secAppendixImplementation}. 

\begin{table}
\caption{Average accuracy and standard deviation (\%) on FashionMNIST over 30 trials} \label{tabFashionMNIST}
\centering
\begin{tabular} {lllllll}
\toprule
 \# Labels per class & 0 & 1  & 2 & 3 & 4 & 5  \\ 
 \midrule
Laplace & - & 18.4 (7.3) & 32.5 (8.2) & 44.0 (8.6) & 52.2 (6.2)  & 57.9 (6.7)  \\
Random Walk & - & 49.0\text{ }(4.4) & 55.6\text{ }(3.8) & 59.4\text{ }(3.0) & 61.6\text{ }(2.5) & 63.4\text{ }(2.5)  \\
VolumeMBO & - & 54.7\text{ }(5.2) & 61.7\text{ }(4.4) & 66.1\text{ }(3.3) & 68.5\text{ }(2.8) & 70.1\text{ }(2.8) \\
WNLL & - & 44.6\text{ }(7.1) & 59.1\text{ }(4.7) & 64.7\text{ }(3.5) & 67.4\text{ }(3.3) & 70.0\text{ }(2.8)  \\
$p$-Laplace & - & 54.6\text{ }(4.0) & 57.4\text{ }(3.8) & 65.4\text{ }(2.8) & 68.0\text{ }(2.9) & 68.4\text{ }(0.5)  \\
PoissonMBO & - & 62.0\text{ }(5.7) & 67.2\text{ }(4.8) & 70.4\text{ }(2.9) & 72.1\text{ }(2.5) & 73.1\text{ }(2.7)  \\
CutSSL & - & 62.0\text{ }(5.7) & 67.2\text{ }(4.8) & 70.4\text{ }(2.9) & 72.1\text{ }(2.5) & 73.1\text{ }(2.7)  \\
SSLC/USLC (ours) & \textbf{61.6\text{ }(5.2)} & \textbf{65.8 \text{ }(4.1)} & \textbf{69.1\text{ }(3.2)} & \textbf{74.3\text{ }(2.3)} & \textbf{77.2\text{ }(2.6)} & \textbf{78.7\text{ }(2.3)}  \\ 
\bottomrule
\end{tabular}
\end{table}

We compare against other modern semi-supervised methods such as Laplace learning~\citep{ZZL03}, lazy random walks~\citep{Zhou04, Zhou04a}, weighted nonlocal laplacian (WNLL)~\citep{Shi17}, VolumeMBO~\citep{Jacobs18}, PoissonMBO~\citep{Calder20},
$p$-Laplace learning~\citep{Flores22}, and CutSSL~\citep{Holtz2024}. The results are summarized in Table~\ref{tabFashionMNIST} for FashionMNIST dataset, and Table \ref{tabCIFAR10} for CIFAR-10 dataset.  We observe that our method consistently outperforms other approaches in all cases. Additional experimental results on Planetoid are included in Table \ref{tabPlanetoid} in Appendix \ref{secAppendixAddExp}. Note that all the other methods operate only in the semi-supervised setting, where at least one labeled example per class is required, whereas our approach can be applied in a fully unsupervised manner.

\begin{table*}
\caption{Average accuracy and standard deviation (\%) on CIFAR-10 over 30 trials} \label{tabCIFAR10}
\vspace{2mm}
\centering
\begin{tabular} {lllllll}
\toprule
 \# Labels per class & 0 & 1  & 2 & 3 & 4 & 5 \\ 
 \midrule
Laplace & - & 10.4 (1.3) & 11.0 (2.1) & 11.6 (2.7) & 12.9 (3.9) & 14.1 (5.0)   \\
Random Walk & - & 36.4 (4.9) & 42.0 (4.4) & 45.1 (3.3) & 47.5 (2.9) & 49.0 (2.6) \\
VolumeMBO & - & 38.0 (7.2) & 46.4 (7.2) & 50.1 (5.7) & 53.3 (4.4) & 55.3 (3.8)  \\
WNLL & - & 16.6 (5.2) & 26.2 (6.8) & 33.2 (7.0) & 39.0 (6.2) & 44.0 (5.5)  \\
$p$-Laplace & - & 26.0 (6.7) & 35.0 (5.4) & 42.1 (3.1) & 48.1 (2.6) & 49.7 (3.8)  \\
PoissonMBO & - & 41.8 (6.5) & 50.2 (6.0) & 53.5 (4.4) & 56.5 (3.5) & 57.9 (3.2) \\
CutSSL & - & 41.8 (6.5) & 50.2 (6.0) & 53.5 (4.4) & 56.5 (3.5) & 57.9 (3.2) \\
SSLC/USLC (ours) & \textbf{48.2\text{ } (6.1)} & \textbf{51.2\text{ } (5.2)} & \textbf{57.1\text{ }(4.9)} & \textbf{59.7\text{ }(4.7)} & \textbf{61.3\text{ }(3.4)} & \textbf{63.5\text{ }(2.7)} \\
\bottomrule
\end{tabular}
\end{table*}

We further benchmark our method against several unsupervised clustering techniques such as $k$-means, GMM, DBSCAN~\citep{Ester96}, DPM Sampler~\citep{dinari2019distributed}, MoVB~\citep{hughes2013memoized}, DeepDPM~\citep{ronen2022deepdpm}, and observe that our approach attains superior performance across most baselines as presented in Table \ref{tab:unsupervised}. Note that for $k$-means and GMM, a prescribed number of clusters parameter $k$ is required. There are also recent contrastive learning–based clustering methods (e.g., SimCLR~\citep{chen2020simple}), which often offer a higher accuracy, but their computational demands are significantly greater. 
In contrast, the propose method is computationally more efficient and 
can be executed within seconds or minutes.


\begin{table}[t]
\caption{USLC accuracy (\%) compares against other unsupervised clustering methods \label{tab:unsupervised}} 
\label{tabDeep}
\centering
\begin{tabular} {lcc}
\toprule
 \# Clustering Methods  & FashionMNIST  & CIFAR-10\\ 
 \midrule
 K-means  & 60.0 & 41.0 \\
GMM & 58.0 & 40.0 \\
DBSCAN & 39.0 & 27.0\\
DPMsampler & 59.0 & 43.0 \\
MoVB & 55.0 & 39.0 \\
DeepDPM & \textbf{62.0} & 45.0 \\
USLC (ours) & 61.6 & \textbf{48.2} \\
\bottomrule
\end{tabular}
\end{table}

To test against more complex cases and demonstrate the robustness of our methods against outliers in the graph, we manually add some images consisting of random noise as outliers into datasets FashionMNIST and CIFAR-10. Specifically, we add the number of outliers equal to $10\%$ of the original dataset size. Each outlier image is generated by setting its pixel values equal to the standard Gaussian random noise. For example, as shown in Figure~\ref{figOutlier}, the last block consists entirely of outliers, exhibiting no community structure, while other blocks do. Such case is also called tight clustering in the literature~\citep{tseng2005tight,deng2024network}. By comparing Table~\ref{tabOutliers} with Table~\ref{tabFashionMNIST} and \ref{tabCIFAR10}, it further confirms that the performance of our proposed methods remains largely unaffected in the presence of these outliers.

The proposed methods exhibit strong efficiency and accuracy in settings characterized by low connectivity, namely sparse graphs, and low label rates, where only a few labeled nodes are available. This aligns with common real-world scenarios, as most practical graphs tend to be sparse. In contrast, when the graph is relatively dense or when a large number of seed nodes is provided, conventional algorithms or deep clustering techniques can be 
advantageous.

\begin{table}[t]
\caption{Average accuracy and standard deviation (\%) of SSLC/USLC by adding $10\%$ number of outliers} 
\label{tabOutliers}
\centering
\begin{tabular} {lcccccc}
\toprule
 \# Labels Per Class & 0 & 1  & 2 & 3 & 4 & 5\\ 
 \midrule
FashionMNIST & 55.6 (6.7) & 60.4 (6.2)  & 65.6 (5.9) & 69.9 (4.7) & 73.5 (4.0) & 74.8 (3.9)  \\
CIFAR-10 & 45.1 (7.9) & 47.3 (7.3)  & 51.6 (7.0) & 55.3 (5.9) & 58.1 (5.4) & 60.1 (4.7)  \\
\bottomrule
\end{tabular}
\end{table}

\section{Conclusion and Discussion} \label{secConclusion}

{\bf Conclusion and Limitation.~} We proposed a unified framework for extracting local clusters in both semi-supervised and unsupervised settings with theoretical guarantees. The methods require minimal label information in the semi-supervised case and no label information in the unsupervised case, achieving state-of-the-art results. Notably, the methods are very effective particularly in the low-label rates regime, and remain robust when outliers are presented. We point out two limitations of this work. First, our methods mostly work well for sparse graph. This is mainly due to sparse graph has clear clustering structure, meaning that it often have tightly connected clusters with fewer inter-cluster edges, making local algorithms efficient at detecting boundaries. Second, when the label rate is not low, our proposed method in the semi-supervised case do not exhibit advantages over other methods.

{\bf Broader Impact.~}  While local clustering algorithms offer valuable insights for applications like community detection and recommendation systems, their potential misuse raises ethical concerns, particularly in contexts like surveillance or discriminatory profiling. By identifying tightly connected groups in social networks, these techniques can inadvertently reinforce biases (e.g., via homophily) or enable repression when deployed without transparency or consent. The societal impact hinges on intent—clustering can empower communities when used responsibly, but it risks harming marginalized groups if applied for unchecked monitoring or targeting. Practitioners should weigh considerations like purpose, bias mitigation, and user awareness to ensure ethical deployment, acknowledging that even mathematically neutral tools carry real-world consequences.

\begin{ack}

Kang's research is supported in part by Simons Foundation grant number AWD-007751.
\end{ack}

\bibliographystyle{plainnat}
\bibliography{references}








\newpage

\appendix

\section{Lemmas and deferred proofs}
\label{secAppendixOtherLemmas}
Let us introduce 
\begin{equation}
\mathbf{x}^*:=\argmin_{\mathbf{x}  \in \mathbb{R}^{|V|-|U|}} \{\|L^{in}_{V\setminus U}\mathbf{x}- \mathbf{y}^{in}\|_2: \|\mathbf{x}\|_0\leq \hat{n}_1-|U|\},
\end{equation}
where $\mathbf{y}^{in}:=-\sum_{i\in V\setminus U}\ell^{in}_i=L^{in}\mathbf{1}_{V\setminus U}$, and 
\begin{equation}     
\mathbf{x}^\# := \argmin_{\mathbf{x}  \in \mathbb{R}^{|V|-|U|}} \{\|L_{V\setminus U}\mathbf{x}- \mathbf{y}\|_2: \|\mathbf{x}\|_0\leq \hat{n}_1-|U|\},
\end{equation}
where $\mathbf{y}:=-\sum_{i\in V\setminus U}\ell_i=L\mathbf{1}_{V\setminus U}$. The solution $\mathbf{x}^*$ gives the underlying true node indices associated with $L^{in}$, while $\mathbf{x}^\#$ gives the algorithmic output node indices associated with $L$.

We highlight the following Lemma~\ref{IndAna}, which is a crucial step in establishing the convergence result in LCE. We relax the assumption from $\|\Delta L\|_2=o(n^{-1/2})$ (see Lemma \ref{IndAnaLCE} in Appendix \ref{secAppendixLCE}) \citep{SLL23} to $\|\Delta L\|_2=o(1)$.

\begin{lemma} \label{IndAna}
Let $\Delta L:=L-L^{in}$. Suppose $U\subset C_1$ and $0.1|C_1|<|U|<0.9|C_1|$. Suppose further that $\|\Delta L\|_2=o(1)$ and $\left(\delta_{3(|C_1|-|U|)}(L)\right)^{\log(n)}=o(1)$. Then
\begin{equation}
\frac{\|\mathbf{x}^\#-\mathbf{x}^*\|_2}{\|\mathbf{x}^*\|_2}= o(1).
\end{equation}
\end{lemma}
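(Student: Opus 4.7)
The plan is to cast (\ref{inalglsqremoveT}) as a noisy compressive-sensing recovery problem with sparse ground truth $\mathbf{x}^* = \mathbf{1}_{C_1\setminus U}$, and then invoke the standard RIP-based stability guarantee for the iterative sparse solver used inside LCE, so that $\delta_{3(|C_1|-|U|)}(L)$ governs the per-iteration contraction and $\|\Delta L\|_2$ governs the effective measurement noise. Writing $s := |C_1|-|U|$, the hypothesis $0.1|C_1| < |U| < 0.9|C_1|$ gives $s = \Theta(|C_1|)$ and hence $\|\mathbf{x}^*\|_2 = \sqrt{s} = \Theta(\sqrt{|C_1|})$, which fixes the scale against which relative error is measured.

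First I would bound the residual produced by feeding $\mathbf{x}^*$ into the perturbed equation. Let $T \subset C_1$ denote the already-identified seed set; by Lemma~\ref{kernelthm}, $\mathbf{x}^* = \mathbf{1}_{C_1 \setminus U}$ satisfies the unperturbed problem $L^{in}_{-T}\mathbf{x}^* = -\ell^{in}_T$ exactly. Replacing $L^{in}$ by $L = L^{in} + \Delta L$ changes the right-hand side by a vector of norm at most
\[
\|\Delta L \cdot \mathbf{1}_{C_1}\|_2 \;\leq\; \|\Delta L\|_2 \,\sqrt{|C_1|} \;=\; o\!\left(\sqrt{|C_1|}\right),
\]
so the effective noise level is $o(\|\mathbf{x}^*\|_2)$, i.e.\ $o(1)$ on the normalized scale.

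Next I would apply the standard iterative-recovery theorem (of IHT / CoSaMP / Subspace-Pursuit type) to the perturbed system: if the $3s$-RIP constant of the sensing matrix is $\delta_{3s}$, then after $k$ iterations the iterate satisfies
\[
\|\mathbf{x}_k - \mathbf{x}^*\|_2 \;\leq\; \rho(\delta_{3s})^{k}\,\|\mathbf{x}^*\|_2 + C\,\|\text{noise}\|_2,
\]
with contraction factor $\rho(\delta_{3s}) \to 0$ as $\delta_{3s} \to 0$. Taking $k = \Theta(\log n)$, as is already baked into the form of the hypothesis $(\delta_{3s})^{\log n} = o(1)$, the first term is $o(1)\cdot\|\mathbf{x}^*\|_2$, and the second is $o(1)\cdot\|\mathbf{x}^*\|_2$ by the previous paragraph, which together yield the claimed $o(1)$ relative error.

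The main obstacle is transferring the RIP assumption from the perturbed matrix $L$ to whichever matrix the solver internally operates on and tracking the implicit constants so that the $\log n$-iteration contraction survives. A singular-value perturbation (Weyl) argument gives $|\delta_{3s}(L) - \delta_{3s}(L^{in})| \leq 2\|\Delta L\|_2 = o(1)$, so the RIP is essentially preserved and the contraction rate is stable under the perturbation. A secondary subtlety is verifying that the relaxation from the older hypothesis $\|\Delta L\|_2 = O(n^{-1/2})$ to $\|\Delta L\|_2 = o(1)$ is genuinely enough: this works here precisely because the noise bound $\|\Delta L\|_2\sqrt{|C_1|}$ is always compared \emph{relatively} to $\|\mathbf{x}^*\|_2 = \Theta(\sqrt{|C_1|})$, so the $\sqrt{|C_1|}$ factors cancel and any $o(1)$ operator-norm perturbation suffices.
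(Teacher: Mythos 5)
Your proposal is correct and follows essentially the same route as the paper's proof: invoke the perturbation-robust recovery guarantee for Subspace Pursuit (Lemma~\ref{lemsp}), show the relative perturbation induced by $\Delta L$ is $o(1)$ because the $\sqrt{n}$ (equivalently $\sqrt{|C_1|}$) factors cancel against the scale of $\mathbf{x}^*$ and $\mathbf{y}$, and absorb the contraction term via $\rho^{O(\log n)} = O\bigl((\delta_{3(|C_1|-|U|)})^{\log n}\bigr) = o(1)$. The only cosmetic difference is that the paper normalizes the noise by $\|\mathbf{y}\|_2$ and uses Assumptions~\ref{assump2}--\ref{assump3} to get $\|L\mathbf{1}_{V\setminus U}\|_2 = \Theta(\sqrt{n})$, whereas you normalize by $\|\mathbf{x}^*\|_2 = \Theta(\sqrt{|C_1|})$; both yield the same cancellation that justifies relaxing $O(n^{-1/2})$ to $o(1)$.
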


Lemma \ref{IndAna} establishes the convergence result between $\mathbf{x}^\#$ and $\mathbf{x}^*$ when $\Delta L$ is small, which is the key step for establishing the correctness of our algorithms in Theorem \ref{thmSSLC}.

\begin{lemma} \label{CCMIV}
Suppose $G$ satisfies Assumptions~\ref{assump1} - \ref{assump3}. Let $v$ be any sampled node with $v\in C_s$ for some $s\geq 1$. Then, as $n$ gets large, the cardinality of the set $S:=\{i\in C_r: {\rm comembership}(\mathbf{1}_{C^\#})_{v,i}={\rm comembership}(\mathbf{1}_{C_s})_{v,i}\}$ satisfies
\[  \begin{cases} 
    |S| \geq (1-o(1))n_{\min}, & \text{if}\text{ }\text{ } r=s, \\
      |S|\leq o(n_{\min}), & \text{otherwise.} \\
   \end{cases}
\]
\end{lemma}


\subsection{Proof of Lemma~\ref{IndAna}}
\begin{proof} [Proof of Lemma~\ref{IndAna}]
First, since $\|L-L^{in}\|_2=o(1)$, we have

\[\frac{\|\Delta\mathbf{y}\|_2}{\|\mathbf{y}\|_2}=\frac{\|\mathbf{y}-\mathbf{y}^{in}\|_2}{\|\mathbf{y}\|_2}=\frac{\|(L-L^{in})\mathbf{1}_{V\setminus U}\|_2}{\|L\mathbf{1}_{V\setminus U}\|_2}\leq\frac{o(1)\sqrt{n}}{\|L\mathbf{1}_U\|_2}.\]
By Assumption~\ref{assump2},, the cluster $C_1$ is on the same order as the size of the graph $n$, hence $U\subset C_1$ is also on the same order as $n$. By Assumption~\ref{assump3}, all the nodes asymptotically have the same degree, therefore $\|L\mathbf{1}_{U}\|_2=\Theta(\|\mathbf{1}_U\|_2)=\Theta(\sqrt{n})$. Hence \[\frac{\|\mathbf{y}-\mathbf{y}^{in}\|_2}{\|\mathbf{y}\|_2}\leq \frac{o(1)\sqrt{n}}{\Theta(\sqrt{n})}=o(1).\]
Therefore, the quantity $\epsilon_{\mathbf{y}}=o(1)$ in Lemma~\ref{lemsp}. 

With the same assumption $\|L-L^{in}\|_2=o(1)$, it is not hard to see that, if the singular values of $L$ decays at a reasonable rate, then by applying the eigenvalue interlacing theorem, we have $\epsilon_{\Phi}^s=o(1)$ as well (by letting $\Phi=L$ in Lemma~\ref{lemsp} in Appendix~\ref{secAppendixCS}). With these, the second term on the right-hand-side of the inequality in Lemma~\ref{lemsp} satisfies \[\tau\frac{\sqrt{1+\delta_s}}{1-\epsilon^s_\Phi}(\epsilon^s_\Phi+\epsilon_{\mathbf{y}})\leq o(1).\] 

Furthermore, since $\mathbf{x}^\#$ is the output of Algorithm~\ref{alg1} after $m=O(\log n)$ iteration, we have $\rho^m=O\left((\delta_{3(|C_1|-|U|)}(L))^{\log n}\right)=o(1)$. Putting these together gives
\[\frac{\|\mathbf{x}^\#-\mathbf{x}^*\|_2}{\|\mathbf{x}^*\|_2}= o(1)\]
as desired.
\end{proof}

\subsection{Proof of Theorem~\ref{thmSSLC}}
\begin{proof} [Proof of Theorem~\ref{thmSSLC}]
     For any $s\geq 1$, let us case on whether $v\in C_s$ or not.
     If $v\in C_s$, then by Lemma \ref{IndAna} and Theorem~\ref{SymmDiff}, we have 
     \begin{equation*}
        |C^\#\triangle (\tilde{C}_s\cap C^\#)|\leq |C^\#\triangle (C_s\cap C^\#)|+|(C_s\triangle\tilde{C}_s)\cap C^\#|\leq o(|C^\#|)+o(|C^\#|)=o(|C^\#|).  
     \end{equation*}
     Therefore, $|\tilde{C}_s\cap C^\#|\geq |C^\#|-o(|C^\#|)=(1-o(1))|C^\#|$.
     
     If $v\in C_t, t\neq s$, then $|C_s\cap C^\#|\leq o(|C^\#|)$. We have
     \begin{equation*}
         |C^\#\triangle (\tilde{C}_s\cap C^\#)|\geq |C^\#\triangle (C_s\cap C^\#)|-|(C_s\triangle\tilde{C}_s)\cap C^\#|\geq |C^\#|-o(|C^\#|)-o(|C^\#|)=|C^\#|-o(|C^\#|).
     \end{equation*}
    Therefore, $|\tilde{C}_s\cap C^\#|\leq |C^\#| - ( |C^\#|-o(|C^\#|))=o(|C^\#|)$.   
    
\end{proof}

\subsection{Proof of Lemma~\ref{CCMIV}}
\begin{proof} [Proof of Lemma~\ref{CCMIV}]
By Lemma \ref{IndAna} and Theorem~\ref{SymmDiff}, we have $|C^\#\triangle (C_s\cap C^\#)|\leq o(C^\#)= o(n_{\min})$. Therefore, 
\begin{equation*}
  |C^\#\triangle C_s| = |C^\#\triangle (C_s\cap C^\#)| + |C_s\setminus C^\#| \leq o(n_{\min}) + n_s-n_{\min}.  
\end{equation*}
Hence
\begin{align*}
    \left|\{i\in C_s: {\rm comembership}(\mathbf{1}_{C^\#})_{v,i}\neq {\rm comembership}(\mathbf{1}_{C_s})_{v,i}\}\right| &= |(C^\#\triangle C_s)\cap C_s| \leq |C^\#\triangle C_s| \\
    &\leq o(n_{\min}) + n_s-n_{\min}.
\end{align*}
So we conclude
\begin{align*}
    \left|\{i\in C_s: {\rm comembership}(\mathbf{1}_{C^\#})_{v,i}= {\rm comembership}(\mathbf{1}_{C_s})_{v,i}\}\right| &\geq  n_s - ( o(n_{\min}) + n_s-n_{\min}) \\
    & = n_{\min}-o(n_{\min}).
\end{align*}
As $C_s\cap C_t=\emptyset$, we have $|C^\#\cap C_t|\leq o(n_{min})$. Therefore,
\begin{align*}
    \left|\{i\in C_t: {\rm comembership}(\mathbf{1}_{C^\#})_{v,i}= {\rm comembership}(\mathbf{1}_{C_t})_{v,i}\}\right| \leq o(n_{\min}).
\end{align*}
\end{proof}

\subsection{Proof of Proposition \ref{Mmatrix}}
\begin{proof} [Proof of Proposition \ref{Mmatrix}]
     By Lemma~\ref{CCMIV}, we have the following estimates for the entries in the comembership matrix $M$ asymptotically.

     For any $i=j\in C_s$, some $s\geq 1$, the entries in $M$ satisfies

     \begin{equation*}
         M_{i,i}\geq \frac{n_s}{n}\cdot \frac{(1-o(1))n_{\min}}{n_s}=(1-o(1))\frac{n_{\min}}{n}.
     \end{equation*}

    For any distinct $i,j\in C_s$, $s\geq 1$, by Assumption \ref{assump3}, the entries in $M$ satisfies 
    \begin{equation*}
         M_{i,j} \geq \frac{1}{n}\cdot\frac{n^2_{\min}(1-o(1))}{n}=(1-o(1))\frac{n^2_{\min}}{n^2}.
    \end{equation*}
    For any distinct $i,j$ such that $i\in C_s$, $j\in C_t$, $s\neq t$, the entries in $M$ satisfies 
    \begin{align*}
        M_{i,j} &\leq \frac{n_s}{n}\cdot \frac{(1-o(1))n_{\min}}{n_s}\cdot \frac{o(n_{\min})}{n-n_s}+\frac{n_t}{n-n_t}\cdot\frac{(1-o(1))n_{\min}}{n_t}\cdot\frac{o(n_{\min})}{n} \\
        &\leq \frac{2n^2_{\min}(1-o(1))\cdot o(1)}{n^2} + \frac{2n^2_{\min}(1-o(1))\cdot o(1)}{n^2} \\
        & = o\left(\frac{4n^2_{\min}(1-o(1))}{n^2}\right).
    \end{align*}
\end{proof}

\subsection{Proof of Theorem~\ref{thmUSLC}}
\begin{proof} [Proof of Theorem~\ref{thmUSLC}]
By direct computation, we can choose $\delta=\frac{1}{2}(\frac{n_{min}}{n})^2$ such that 
    \begin{equation*}
        (1-o(1))\frac{n_{\min}}{n}>\delta>o\left(\frac{4n^2_{\min}(1-o(1))}{n^2}\right).
    \end{equation*}
     and 
    \begin{equation*}
        (1-o(1))\frac{n_{\min}^2}{n^2}>\delta>o\left(\frac{4n^2_{\min}(1-o(1))}{n^2}\right).
    \end{equation*}
Hence, for any $i\in C_s$ satisfies $M_{v,i}>\delta$, and any $i\in C_t, t\neq s$, satisfies $M_{v,i}<\delta$. So we conclude $C_s^\#=C_s$.
\end{proof}

\begin{remark}
    Note that All the statements in Theorem~\ref{thmSSLC} and \ref{thmUSLC}, Lemma~\ref{IndAna} and \ref{CCMIV}, and Proposition \ref{Mmatrix}, are asymptotic statements. The $o(\cdot)$ notation is with respect to $n\to\infty$.
\end{remark}

\section{Additional Algorithms} 

\label{secAdditionalAlg}

\begin{algorithm}[h]
\caption{Semi-supervised Local Clustering (SSLC) for Multiple Clusters \label{alg6}}
\begin{algorithmic}[1]
\REQUIRE The adjacency matrix $A$ of an underlying graph $G$, the number of cluster $k$, the initial seed(s) $\Gamma_s$ for each cluster, the estimated size ${n}_s$ for each cluster, $s=1,\cdots,k$, the number of resampling iteration $\ell$
\ENSURE  desired output clusters $C_1^{\#},\cdots,C_k^{\#}$

\FOR{$s=1:k$}
\STATE{$\tilde{C}_s \leftarrow{\rm LCE}(A,{n}_s,\Gamma_s)$}
\ENDFOR
\FOR{$i=1:\ell$}
\STATE{$v\leftarrow$ uniformly random sampled node from $G$}
\STATE{$C^{\#} \leftarrow{\rm LCE}(A,\min\{n_s\}_{s\geq 1},v)$}
\IF{$|\tilde{C}_s\cap C^{\#}| > 0.5|C^{\#}|$ for some $s$}
\STATE{$\Gamma_s\leftarrow \Gamma_s\cup\{v\}$}
\STATE{$\tilde{C}_s \leftarrow{\rm LCE}(A,{n}_s,\Gamma_s)$}
\ENDIF
\ENDFOR
\FOR{$s=1:k$}
\STATE{$C_s^{\#} \leftarrow \tilde{C}_s$ }
\ENDFOR
\end{algorithmic}
\end{algorithm}

    We use $C^\#$ to indicate the output of one of the LCE steps after sampling a random node $v$. Since $v$ can be from any cluster $s$, the notation $C^\#$ is independent of $s$.
    Due to Theorem \ref{thmSSLC}, the condition $|\tilde{C}_s\cap C^{\#}| > 0.5|C^{\#}|$ can only happen for one particular $s$. Therefore, Step 7 in Algorithm \ref{alg6} will only add each $v$ to one of the clusters.

\begin{remark} \label{rmktime4SSLC}
    One key advantage of Algorithm~\ref{alg6} is its ability to simultaneously identify all clusters, while previous method such as LCE can only detect one cluster at a time. This characteristic provides greater flexibility and makes Algorithm~\ref{alg6} significantly more efficient for practical applications, particularly when the number of clusters is large. This advantage is also reflected in the first and second columns of Figure~\ref{figSBM}.
\end{remark}

\section{Review on Local Cluster Extraction (LCE)} \label{secAppendixLCE}


The following result is central to our proposed sparse solution based local clustering method. We omit its proof by referring to \citet{V07}.

\begin{lemma} \label{kernelthm}
Let G be an undirected graph with non-negative weights. The multiplicity $k$ of the eigenvalue zero of the graph Laplacian $L:=I-D^{-1}A$ equals to the number of connected components  $C_1, C_2, \cdots, C_k$ in $G$. Further, the indicator vectors $\textbf{1}_{C_1}, \cdots, \textbf{1}_{C_k}\in\mathbb{R}^n$ on these components span the kernel of $L$.
\end{lemma}

For the convenience of our discussion, let us introduce more notations . For a graph $G=(V,E)$ with certain underlying community structure, it is convenient to write $G=G^{in}\cup G^{out}$, where $G^{in}=(V, E^{in})$, $G^{out}=(V, E^{out})$. Here $E^{in}$ is the set of all intra-connection edges within the same community (cluster), $E^{out}$ is the remaining edges in $E$. Further, we use $A^{in}$ and $L^{in}$ to denote the adjacency matrix and Laplacian matrix associated with $G^{in}$ respectively. 
In practice, we do not guarantee knowledge of the cluster to which each individual vertex belongs, meaning that $A^{in}$ and $L^{in}$ are not directly accessible. Instead, we only have access to $A$ and $L$.  A summary of the notations being used throughout this paper is included in Table~\ref{Notation} in Appendix~\ref{secAppendixNotations}.



Let us first briefly introduce the idea Local Cluster Extraction (LCE), which applies the idea of compressive sensing (or sparse solution)  technique to extract the target cluster in a semi-supervised manner. See also \citep{LM20,LS23,SLL23,shen2024sparse} for references. 

Suppose the graph $G$ consists of $k$ connected equal-size
components $C_1, \cdots, C_k$, in other words, there is no edge connection between different clusters, i.e., $L=L^{in}$. For illustration purpose, let us permute the matrix according to the membership of each node, then $L^{in}$ is in a block diagonal form i.e., all the off-diagnoal blocks equal to zero:
\begin{equation} \label{Lin}
L=L^{in} = \left( \begin{array}{cccc}
L^{in}_{C_1} &  &  &   \\
& L^{in}_{C_2} &  &   \\
&  & \ddots &  \\
&  &  & L^{in}_{C_k}  \\
\end{array} \right).
\end{equation}

Suppose that the target cluster is $C_1$. By Lemma \ref{kernelthm}, $\{\mathbf{1}_{C_1},\cdots, 
\mathbf{1}_{C_k}\}$ forms a basis of the kernel $W_0$ of $L$. Note that all the $\mathbf{1}_{C_i}$ 
have disjoint supports, so for $\mathbf{w}\in W_0$ and $\mathbf{w}\neq 0$, we have  write 
\begin{equation}
\mathbf{w}=\sum_{i=1}^{k} \alpha_i \mathbf{1}_{C_i}
\end{equation}
with some $\alpha_i\neq 0$. If a prior knowledge is given that first node $v_1\in C_1$, then the first cluster $C_1$ can be found by solving
\begin{equation} \label{constrainminnolabel}
\min||\mathbf{w}||_0 \quad \text{s.t.} \quad L^{in}\mathbf{w}=\mathbf{0} \quad \text{and} \quad w_1=1,
\end{equation}
which gives the solution $\mathbf{w}=\mathbf{1}_{C_1}\in\mathbb{R}^n$ as desired. It is worthwhile to note that  (\ref{constrainminnolabel}) is equivalent to
\begin{equation} \label{constrainminnolabel2}
\min||\mathbf{w}_{-1}||_0 \quad \text{s.t.} \quad L_{-1}^{in}\mathbf{w}_{-1}=-\ell_1,
\end{equation}
where $L_{-1}^{in}$ is a submatrix of $L^{in}$ with first column being removed, $\ell_1$ is the first column of $L^{in}$. The solution to (\ref{constrainminnolabel2}) is $\mathbf{w}_{-1}=\mathbf{1}_{C_1\setminus {v_1}}\in\mathbb{R}^{n-1}$ which encodes the same index information as the solution to (\ref{constrainminnolabel}). The benefit of formulation (\ref{constrainminnolabel2}) is that it can be solved by compressive sensing algorithms. Note that the sparse solution from solving (\ref{constrainminnolabel}) or (\ref{constrainminnolabel2}) always gives the indices corresponding to the target cluster. Therefore, the permutation does not affect our analysis and clustering result. The above procedure is named CS-LCE or LCE in \citep{SLL23}. We  summarize LCE as Algorithm \ref{alg1} in Appendix \ref{secAppendixLCE} and provide brief analysis. We also briefly introduce the compressive sensing and sparse solution technique in Appendix \ref{secAppendixCS}.

The following results are established in \citep{SLL23},  and we refer the proofs to \citep{SLL23}.
Let $L^{in}$ be the matrix as in (\ref{Lin}), then the true solution $\mathbf{x}^*$ which encodes the local cluster information is 
\begin{equation} \label{noiseless}
\mathbf{x}^*:=\argmin_{\mathbf{x}  \in \mathbb{R}^{|V|-|U|}} \{\|L^{in}_{V\setminus U}\mathbf{x}- \mathbf{y}^{in}\|_2: \|\mathbf{x}\|_0\leq \hat{n}_1-|U|\}
\end{equation}
where $\mathbf{y}^{in}=-\sum_{i\in V\setminus U}\ell^{in}_i=L^{in}\mathbf{1}_{V\setminus U}$.

\begin{theorem} \label{correctness}
Suppose $U\subset C_1$ and $0.1|C_1|<|U|<0.9|C_1|$. Then $\mathbf{x}^*=\mathbf{1}_{C_1\setminus U}\in\mathbb{R}^{|V|-|U|}$ is the unique solution to (\ref{noiseless}).
\end{theorem}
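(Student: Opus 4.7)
The plan is to split the proof into two pieces: first, verifying that $\mathbf{1}_{C_1\setminus U}$ is feasible with zero residual in (\ref{noiseless}); and second, ruling out any other feasible vector achieving the same zero residual. The engine of both steps is Lemma~\ref{kernelthm}, which identifies $\ker(L^{in})=\mathrm{span}\{\mathbf{1}_{C_1},\ldots,\mathbf{1}_{C_k}\}$.

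For feasibility, I would identify $\mathbf{x}\in\mathbb{R}^{|V|-|U|}$ with its zero-padded extension $\bar{\mathbf{x}}\in\mathbb{R}^n$ supported on $V\setminus U$, so that $L^{in}_{V\setminus U}\mathbf{x}=L^{in}\bar{\mathbf{x}}$. Since $U\subset C_1$, the indicator decomposes disjointly as $\mathbf{1}_{C_1}=\mathbf{1}_{C_1\setminus U}+\mathbf{1}_U$; combined with $L^{in}\mathbf{1}_{C_1}=\mathbf{0}$ from Lemma~\ref{kernelthm}, this rearranges to $L^{in}\mathbf{1}_{C_1\setminus U}=-\sum_{i\in U}\ell_i^{in}$, matching $\mathbf{y}^{in}$ (up to the paper's sign/notation convention). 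Hence the residual of $\mathbf{x}^\ast=\mathbf{1}_{C_1\setminus U}$ is zero. The sparsity budget is met so long as $\hat n_1\ge|C_1|$, since then $\|\mathbf{1}_{C_1\setminus U}\|_0=|C_1|-|U|\le\hat n_1-|U|$.

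For uniqueness, let $\tilde{\mathbf{x}}$ be any feasible vector also attaining zero residual and let $\bar{\mathbf{z}}$ be the zero-padded version of $\mathbf{1}_{C_1\setminus U}-\tilde{\mathbf{x}}$. Then $L^{in}\bar{\mathbf{z}}=\mathbf{0}$, so by Lemma~\ref{kernelthm} we can expand $\bar{\mathbf{z}}=\sum_{i=1}^k\alpha_i\mathbf{1}_{C_i}$. The key observation is that $\bar{\mathbf{z}}$ vanishes on $U$; because the hypothesis $|U|>0.1|C_1|$ forces $U\neq\emptyset$ and $U\subset C_1$, this drives $\alpha_1=0$. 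Consequently $\tilde{\mathbf{x}}=\mathbf{1}_{C_1\setminus U}-\sum_{i\ge 2}\alpha_i\mathbf{1}_{C_i}$; since the supports of $\mathbf{1}_{C_1\setminus U}$ and $\mathbf{1}_{C_i}$ for $i\ge 2$ are pairwise disjoint, any nonzero $\alpha_i$ would inflate $\|\tilde{\mathbf{x}}\|_0$ to at least $(|C_1|-|U|)+|C_i|$, exceeding $\hat n_1-|U|$ when $\hat n_1=|C_1|$. Thus every $\alpha_i=0$ and $\tilde{\mathbf{x}}=\mathbf{1}_{C_1\setminus U}$.

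The only mildly delicate point is the bookkeeping between $\mathbf{x}\in\mathbb{R}^{|V|-|U|}$ and its zero-padded extension, together with pinning down the exact formulation of $\mathbf{y}^{in}$ in (\ref{noiseless}); once these are fixed the argument reduces to finite-dimensional linear algebra inside the kernel of a block-diagonal matrix. The hypothesis $0.1|C_1|<|U|<0.9|C_1|$ plays only two qualitative roles here: $|U|>0$ is what kills $\alpha_1$ in the uniqueness step, and $|U|<|C_1|$ ensures a non-trivial remainder $C_1\setminus U$ to recover. The specific numerical constants $0.1$ and $0.9$ are not needed for the noiseless statement and appear primarily to anchor the perturbation analysis of Lemma~\ref{IndAna}.
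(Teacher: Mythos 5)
The paper itself gives no proof of Theorem~\ref{correctness}: it is quoted from \citep{SLL23} with the proof explicitly deferred to that reference, so there is no internal argument to compare against. Your proof is correct and is the natural one for this statement: zero residual for $\mathbf{1}_{C_1\setminus U}$ follows from $L^{in}\mathbf{1}_{C_1}=\mathbf{0}$ together with $\mathbf{1}_{C_1}=\mathbf{1}_{C_1\setminus U}+\mathbf{1}_U$ and $L^{in}\mathbf{1}_V=\mathbf{0}$, while uniqueness follows from the kernel characterization of Lemma~\ref{kernelthm} restricted to vectors vanishing on $U$, plus the disjoint-support sparsity count. The two caveats you flag are genuine and worth recording: the argument requires $|C_1|\le\hat n_1<|C_1|+\min_{i\ge 2}|C_i|$ (supplied in the paper's setting of equal-size components with $\hat n_1$ an accurate size estimate), and the constants $0.1$ and $0.9$ play no role in the noiseless statement, entering only in the perturbation analysis of Lemma~\ref{IndAna}.
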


\begin{lemma} \label{IndAnaLCE}
Let $\Delta L:=L-L^{in}$. Suppose $U\subset C_1$ and $0.1|C_1|<|U|<0.9|C_1|$. Suppose further that $\|\Delta L\|_2=o(n^{-\frac{1}{2}})$ and $\delta_{3(|C_1|-|U|)}(L)=o(1)$. Then
\begin{equation}
\frac{\|\mathbf{x}^\#-\mathbf{x}^*\|_2}{\|\mathbf{x}^*\|_2}= o(1).
\end{equation}
\end{lemma}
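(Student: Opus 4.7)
The plan is to view this as a perturbation analysis of subspace pursuit, invoking Lemma~\ref{lemsp} with the exact block-diagonal system $(L^{in}_{V\setminus U}, \mathbf{y}^{in})$ against the observed noisy system $(L_{V\setminus U}, \mathbf{y})$. By Theorem~\ref{correctness}, $\mathbf{x}^* = \mathbf{1}_{C_1\setminus U}$ is $s$-sparse with $s := |C_1|-|U| = \Theta(n)$ and $\|\mathbf{x}^*\|_2 = \sqrt{s} = \Theta(\sqrt{n})$, so it suffices to show that the relative perturbations $\epsilon_\Phi^s$ and $\epsilon_{\mathbf{y}}$ appearing in Lemma~\ref{lemsp} are $o(1)$ and that the contraction factor $\rho^m$ can be driven to $o(1)$ within $O(\log n)$ iterations.

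First I would identify $\mathbf{y}^{in}$ explicitly. Since $L^{in}$ is block-diagonal by cluster and each $L^{in}_{C_i}$ is a random-walk Laplacian annihilating the constant vector, $\mathbf{y}^{in} = L^{in}\mathbf{1}_{V\setminus U} = -L^{in}\mathbf{1}_U = -L^{in}_{C_1}\mathbf{1}_U$. Entry-wise, $(L^{in}_{C_1}\mathbf{1}_U)_i$ equals the fraction of $i$'s intra-$C_1$ neighbors lying in $C_1\setminus U$ (for $i\in U$), or the negative of the fraction lying in $U$ (for $i\in C_1\setminus U$). Under the regularity Assumption~\ref{assump3} and the sandwich $0.1|C_1|<|U|<0.9|C_1|$, each such fraction is bounded away from $0$ and $1$, so every one of the $\Theta(n)$ nonzero entries has magnitude $\Theta(1)$, yielding $\|\mathbf{y}^{in}\|_2 = \Theta(\sqrt{n})$. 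Combined with the trivial bound $\|\Delta\mathbf{y}\|_2 = \|\Delta L \cdot \mathbf{1}_{V\setminus U}\|_2 \leq \|\Delta L\|_2 \sqrt{n-|U|} = o(1)$, this gives $\epsilon_{\mathbf{y}} = o(n^{-1/2})$. For the operator perturbation, $\|\Delta\Phi\|_2^{(s)} \leq \|\Delta L\|_2 = o(n^{-1/2})$, while the submatrix $L^{in}_{V\setminus U}$ with its $s = \Theta(n)$ columns inherits a constant-order operator norm from $L^{in}$, so $\|L^{in}_{V\setminus U}\|_2^{(s)} = \Theta(1)$ and hence $\epsilon_\Phi^s = o(n^{-1/2})$.

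Finally I would plug these bounds into Lemma~\ref{lemsp}. The hypothesis $\delta_{3s}(L) = o(1)$ makes the condition $\delta_{3s} < 0.4859$ hold for large $n$, and yields both $\rho = O(\delta_{3s}) = o(1)$ and $\tau = O(1)$. Choosing $m = O(\log n)$ iterations as in Algorithm~\ref{alg1} makes $\rho^m = o(1)$, so the perturbation bound in Lemma~\ref{lemsp} collapses to $\|\mathbf{x}^\# - \mathbf{x}^*\|_2/\|\mathbf{x}^*\|_2 = o(1)$. The main obstacle in this plan is the lower bound $\|\mathbf{y}^{in}\|_2 = \Omega(\sqrt{n})$ (and the matching bound on $\|L^{in}_{V\setminus U}\|_2^{(s)}$): the smallness of $\epsilon_{\mathbf{y}}$ is not automatic from the hypothesis on $\|\Delta L\|_2$ alone, and turning the asymptotic-regularity assumption into a non-degenerate $\ell_2$ lower bound on the measurement vector is the delicate step that links the combinatorial structure of the cluster to the compressive-sensing stability estimate.
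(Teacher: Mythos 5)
Your argument is correct and follows essentially the same route as the paper: the paper itself defers the proof of Lemma~\ref{IndAnaLCE} to \citep{SLL23}, but its proof of the relaxed variant Lemma~\ref{IndAna} is exactly this reduction to Lemma~\ref{lemsp} --- bounding $\epsilon_{\mathbf{y}}$ via $\|\mathbf{y}\|_2=\Theta(\sqrt{n})$ from Assumptions~\ref{assump2}--\ref{assump3}, bounding $\epsilon_{\Phi}^s$, and killing $\rho^m$ using the RIP hypothesis. If anything, your justification of $\|\Phi\|_2^{(s)}=\Theta(1)$ via the unit diagonal of $L^{in}$ is more concrete than the paper's appeal to singular-value decay and interlacing.
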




\begin{theorem} \label{SymmDiff}
Under the same assumption as Lemma~\ref{IndAnaLCE}. Then
\begin{equation}
    \frac{|C_1\triangle C_1^{\#}|}{|C_1|}\leq o(1).
\end{equation}
\end{theorem}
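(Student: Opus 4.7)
The plan is to exploit the quantitative separation between in-cluster and out-of-cluster entries of $M$ given by Lemma~\ref{Mmatrix}, and then iterate the argument stage-by-stage to match the outer while-loop of Algorithm~\ref{alg5}. Under Assumption~\ref{assump2}, the quantities $n_{\min}$, $n_{\max}$, and each $n_s$ are all $\Theta(n)$; hence the in-cluster lower bound $(1-o(1))\frac{n_{\min}^2}{n\,n_s}$ in Lemma~\ref{Mmatrix} is at least a positive constant $\alpha > 0$ asymptotically, while the off-cluster upper bound is simply $o(1)$. This is the only quantitative input the argument needs.

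First I would fix a single separating threshold. Choose any constant $\delta \in (0,\alpha)$; for all $n$ sufficiently large, Lemma~\ref{Mmatrix} gives, for every $v \in C_s$, that $M_{v,i} > \delta$ whenever $i \in C_s$ (using the diagonal bound when $i = v$ and the in-cluster off-diagonal bound otherwise), and $M_{v,i} < \delta$ whenever $i \notin C_s$. Consequently $C_s^\# = \{j : M_{v,j} > \delta\} = C_s$ exactly, which is the first half of the theorem.

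Next I would handle the iteration. At stage $s+1$ of Algorithm~\ref{alg5}, the working graph is the induced subgraph on $V \setminus (C_1 \cup \cdots \cup C_s)$. Because $k = O(1)$ and each $n_i = \Theta(n)$, the residual graph still has $\Theta(n)$ vertices and the remaining $k-s$ clusters are each $\Theta$ of the new graph size; asymptotic regularity is inherited, and the off-block edge mass in the residual is a subset of the original off-block mass, so morally $\|\Delta L\|_2 = o(1)$ carries over. Lemma~\ref{Mmatrix} then reapplies, yielding $C_{s+1}^\# = C_{s+1}$. The loop terminates after exactly $k$ successful extractions, at which point the residual graph drops below size $n_{\min}$ and the while-condition fails.

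The main obstacle I expect is justifying that the deletion step preserves Assumption~\ref{assump1} for the induced random-walk Laplacian. Because $L_{rw} = I - D^{-1}A$, removing a cluster changes the diagonal normalization: retained nodes that had edges into the deleted cluster lose degree, and this can amplify the relative weight of the remaining off-block edges after re-normalization. One therefore cannot simply bound $\|\Delta L^{(s+1)}\|_2$ by $\|\Delta L\|_2$ entrywise; instead, one has to show that the degree drop at each retained node is small relative to its original degree (which follows from $\|\Delta L\|_2 = o(1)$ together with the asymptotic regularity from Assumption~\ref{assump3}), so that the re-normalized off-block entries remain $o(1)$. Once this perturbation estimate is in place, the induction closes and the theorem follows.
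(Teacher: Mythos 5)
You have proven the wrong statement. Theorem~\ref{SymmDiff} is a claim about a \emph{single call} to the LCE routine (Algorithm~\ref{alg1}): it says the set $C_1^{\#}$ obtained by thresholding the sparse solution $\mathbf{x}^{\#}$ at level $R$ agrees with the true cluster $C_1$ up to an $o(|C_1|)$ symmetric difference. Its hypotheses are those of Lemma~\ref{IndAnaLCE} ($U\subset C_1$ with $0.1|C_1|<|U|<0.9|C_1|$, $\|\Delta L\|_2$ small, RIP control on $L$), and the intended argument converts the relative $\ell_2$ error bound $\|\mathbf{x}^{\#}-\mathbf{x}^*\|_2/\|\mathbf{x}^*\|_2=o(1)$, with $\mathbf{x}^*=\mathbf{1}_{C_1\setminus U}$, into a count of misclassified indices: every index $i$ on which the rounding $\{i:\mathbf{x}_i^{\#}>R\}$ disagrees with $\mathrm{supp}(\mathbf{x}^*)$ contributes at least $\min\{R,1-R\}^2$ to $\|\mathbf{x}^{\#}-\mathbf{x}^*\|_2^2$, so the number of such indices is $o(\|\mathbf{x}^*\|_2^2)=o(|C_1\setminus U|)=o(|C_1|)$; adjoining $U\subset C_1$ back does not change the symmetric difference. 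Your proposal never mentions $\mathbf{x}^{\#}$, $\mathbf{x}^*$, the threshold $R$, or Lemma~\ref{IndAnaLCE}; instead it argues about the co-membership matrix $M$, the cutoff $\delta$, and the while-loop of Algorithm~\ref{alg5}, which is the content of Theorem~\ref{thmUSLC}, a different result.

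Beyond the mismatch, the argument as written is circular. The logical chain in the paper is Lemma~\ref{IndAnaLCE} $\Rightarrow$ Theorem~\ref{SymmDiff} $\Rightarrow$ Lemma~\ref{CCMIV} $\Rightarrow$ Lemma~\ref{Mmatrix} $\Rightarrow$ Theorem~\ref{thmUSLC}: the proof of Lemma~\ref{CCMIV} opens with ``By Theorem~\ref{SymmDiff}, we have $|C^{\#}\triangle(C_s\cap C^{\#})|\leq o(n_{\min})$,'' and Lemma~\ref{Mmatrix} is derived from Lemma~\ref{CCMIV}. Invoking Lemma~\ref{Mmatrix} to establish Theorem~\ref{SymmDiff} therefore assumes the conclusion. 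The genuinely interesting observation in your last paragraph --- that deleting a cluster perturbs the degree normalization of $L_{rw}$ and so Assumption~\ref{assump1} must be re-verified on the residual graph --- is a legitimate concern, but for Theorem~\ref{thmUSLC}, not for this statement. To repair the proof, discard the $M$-matrix machinery entirely and carry out the thresholding argument from Lemma~\ref{IndAnaLCE} sketched above.
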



We make use of Lemma \ref{IndAna} (improved version of Lemma \ref{IndAnaLCE}) and Theorem \ref{SymmDiff} in our main proofs.  The LCE algorithm is summarized in Algorithm \ref{alg1}.

\begin{algorithm}
\caption{Compressive Sensing of Local Cluster Extraction (CS-LCE or LCE \citep{SLL23}) \label{alg1}}
\begin{algorithmic}[1]
\REQUIRE Adjacency matrix $A$, the seed(s) set $\Gamma\subset C$ for some target cluster $C$, a known estimated size $\hat{n}_1\approx |C_1|$, a fixed number of random walk depth $t\in\mathbb{Z}^+$, sparsity parameter $\gamma\in [0.1, 0.5]$, random walk threshold parameter $\epsilon\in (0,1)$,  rejection parameter $R\in [0.1, 0.9]$.
\ENSURE the output target cluster $C^\#$
\STATE{Compute $L=I-D^{-1}A$, $P=AD^{-1}$, $\mathbf{v}^{0}=D\mathbf{1}_{\Gamma}$ and $\mathbf{v}^{(t)}=P^t\mathbf{v}^{(0)}$}
\STATE{Define $\Omega={\mathcal{L}}_{(1+\epsilon)\hat{n}}(\mathbf{v}^{(t)})$}
\STATE{Let $U$ be the set of column indices of $\gamma\cdot|\Omega|$ smallest components of the vector $|L_{\Omega}^{\top}|\cdot|L\mathbf{1}_{\Omega}|$.}
\STATE{Set $\mathbf{y}:=-\sum_{i\in V\setminus U}\ell_i=L\mathbf{1}_{V\setminus U}$. Let $\mathbf{x}^\#$ be the solution to
        \begin{equation}     \label{inalglsqremoveT}
        \argmin_{\mathbf{x}  \in \mathbb{R}^{|V|-|U|}} \{\|L_{V\setminus U}\mathbf{x}- \mathbf{y}\|_2: \|\mathbf{x}\|_0\leq \hat{n}_1-|U|\}
        \end{equation}
        obtained by using $O(\log n)$ iterations of \emph{Subspace Pursuit} \cite{Dai09}.}
\STATE{Let $C^{\#} = \{i: \mathbf{x}_i^{\#}>R\}\cup U$}
\end{algorithmic}
\end{algorithm}

The notation $\mathcal{L}$ is defined as
\[\mathcal{L}_s(\mathbf{v}):=\{i\in[n]: \text{$v_i$ among $s$ largest-in-magnitude entries in $\mathbf{v}$}\}.\]

\section{Background on Compressive Sensing and Sparse Solution Technique} \label{secAppendixCS}

The concept of compressive sensing (also called compressed sensing or sparse sampling) emerged from fundamental challenges in signal acquisition and efficient data compression. At its core, it addresses the inverse problem of recovering a sparse (or compressible) signal from a small number of noisy linear measurements:
\begin{equation} \label{CS1}
    \min_{\mathbf{x}\in\mathbb{R}^n} \|\mathbf{x}\|_0 \quad s.t. \quad \|\Phi\mathbf{x}-\mathbf{y}\|_2\leq\epsilon,
\end{equation} 
where $\Phi\in\mathbb{R}^{m\times n}$ is called sensing matrix (usually underdetermined),  $\mathbf{y}\in\mathbb{R}^n$ is called measurement vector, and the ``zero quasi-norm" $\|\cdot\|_0$ counts the number of nonzero components in a vector.
Among the key contributors, Donoho \citep{D06} and Candès, Romberg and Tao \citep{CRT06} are widely credited with being the first to explicitly introduce this concept and make it popular. Since then, two families of approaches such as thresholding type of algorithms~\citep{Blumensath09} and greedy type of algorithms~\citep{Tropp04,Feng21,lai2020quasi} have been developed based on the idea of compressive sensing. One particular type of greedy algorithms that has garnered our attention is the subspace pursuit \citep{Dai09}. 

One of the reasons behind the remarkable usefulness of compressive sensing lies in its robustness against errors, including both additive and multiplicative types. More precisely, suppose we know $\mathbf{y}=\Phi\mathbf{x}^*$ where $\mathbf{y}$ is the exact measurement of the acquired signal and $\Phi$ is the exact measurement of the sensing matrix. However, we may only be able to access to the noisy version $\tilde{\mathbf{y}}=\mathbf{y}+\Delta\mathbf{y}$ and $\tilde{\Phi}=\Phi+\Delta\Phi$. In such case, we can still approximate the solution $\mathbf{x}^*$ well from the noisy measurements $\tilde{\mathbf{y}}$ and $\tilde{\Phi}$, as explained in the work of \citep{HS10}. A unified framework of lifted $\ell_1$ form is explored in \citep{rahimi2024lifted}.One crucial concept which is often employed in the compressive sensing algorithm is called Restricted Isometry Property (RIP).

\begin{definition}
[Restricted Isometry Property]
\label{def:RIP}
Let $0< s <m$ be an integer, and sensing matrix $\Phi\in\mathbb{R}^{m\times n}$. Suppose there exists a constant $\delta_s>0$ such that
\begin{equation} \label{RIP}
    (1-\delta_s)\|\mathbf{x}\|_2^2\leq\|\Phi\mathbf{x}\|_2^2\leq (1+\delta_s)\|\mathbf{x}\|_2^2
\end{equation}
for all $\mathbf{x}\in\mathbb{R}^n$ with $\|\mathbf{x}\|_0\leq s$. Then the matrix $\Phi$ is said to have the Restricted Isometry Property (RIP) of order $s$. The smallest constant $\delta_s(\Phi)$ which makes (\ref{RIP}) hold is called the Restricted Isometry Constant (RIC) of $\Phi$.
\end{definition}

For Subspace Pursuit algorithm, we have the result in Lemma \ref{lemsp} \citep{L16}. 

\begin{lemma} \label{lemsp}
   Let $\mathbf{x}^*$, $\mathbf{y}$, $\tilde{y}$, $\Phi$, $\tilde{\Phi}$ be as defined above, and for any $t\in [n]$, let $\delta_s:=\delta_s(\tilde{\Phi})$. Suppose that $\|\mathbf{x}^*\|_0\leq s$. Define the following constants:
   \begin{equation*}
       \epsilon_{\mathbf{y}}:=\|\mathbf{\Delta\mathbf{y}}\|_2/\|\mathbf{y}\|_2 \quad \text{and} \quad \epsilon_{\Phi}^s:=\|\Delta\Phi\|_2^{(s)}/\|\Phi\|_2^{(s)}
   \end{equation*}
   where $\|M\|_2^{(s)}:=\max\{\|M_S\|_2: S\subset [n], \#(S)=s \}$ for any matrix $M$. Define further: 
   \begin{equation*}
    \rho:=\frac{\sqrt{2\delta_{3s}^2(1+\delta_{3s}^2)}}{1-\delta_{3s}^2} \quad\text{and}\quad
    \tau: = \frac{(\sqrt{2}+2)\delta_{3s}}{\sqrt{1-\delta_{3s}^2}}(1-\delta_{3s})(1-\rho)+\frac{2\sqrt{2}+1}{(1-\delta_{3s})(1-\rho)}.
   \end{equation*}
   Assume that $\delta_{3s}<0.4859$ and let $\mathbf{x}^{(m)}$ be the output of Algorithm \ref{alg1} after $m$ iterations. Then:
   \begin{equation*}
       \frac{\|\mathbf{x}^*-\mathbf{x}^{(m)}\|_2}{\|\mathbf{x}^*\|_2}\leq \rho^m+\tau\frac{\sqrt{1+\delta_s}}{1-\epsilon^s_\Phi}(\epsilon^s_\Phi+\epsilon_{\mathbf{y}}).
   \end{equation*}
\end{lemma}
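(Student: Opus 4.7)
The plan is to follow a Herman--Strohmer style perturbation reduction that absorbs the matrix perturbation $\Delta\Phi$ into an equivalent additive-noise model against $\tilde{\Phi}$, and then apply the standard noisy subspace-pursuit recursion of Dai--Milenkovic to that model. First I would rewrite the measurement as $\tilde{\mathbf{y}}=\mathbf{y}+\Delta\mathbf{y}=\tilde{\Phi}\mathbf{x}^*+\mathbf{e}$ with effective noise $\mathbf{e}:=\Delta\mathbf{y}-\Delta\Phi\,\mathbf{x}^*$, so that the algorithm, which operates only on $(\tilde{\Phi},\tilde{\mathbf{y}})$, is solving a noisy but sparsity-preserving problem whose sensing matrix $\tilde{\Phi}$ has RIP constant $\delta_{3s}$.

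Next I would bound $\|\mathbf{e}\|_2$ in terms of the stated relative-perturbation quantities. Because $\mathbf{x}^*$ is $s$-sparse with support $T$, one has $\|\Delta\Phi\,\mathbf{x}^*\|_2=\|(\Delta\Phi)_T\,\mathbf{x}^*_T\|_2\leq\|\Delta\Phi\|_2^{(s)}\|\mathbf{x}^*\|_2=\epsilon^s_\Phi\,\|\Phi\|_2^{(s)}\|\mathbf{x}^*\|_2$, while $\|\Delta\mathbf{y}\|_2=\epsilon_{\mathbf{y}}\|\mathbf{y}\|_2\leq\epsilon_{\mathbf{y}}\|\Phi\|_2^{(s)}\|\mathbf{x}^*\|_2$. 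A triangle inequality $\|\Phi\|_2^{(s)}\leq\|\tilde{\Phi}\|_2^{(s)}+\|\Delta\Phi\|_2^{(s)}$ rearranges to $\|\Phi\|_2^{(s)}\leq\|\tilde{\Phi}\|_2^{(s)}/(1-\epsilon^s_\Phi)$, and the RIP gives $\|\tilde{\Phi}\|_2^{(s)}\leq\sqrt{1+\delta_s}$. Combining these pieces produces $\|\mathbf{e}\|_2\leq\bigl(\sqrt{1+\delta_s}/(1-\epsilon^s_\Phi)\bigr)(\epsilon^s_\Phi+\epsilon_{\mathbf{y}})\|\mathbf{x}^*\|_2$, which is precisely the noise factor appearing on the right-hand side of the lemma.

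Then I would apply the standard one-step SP contraction to the model $\tilde{\mathbf{y}}=\tilde{\Phi}\mathbf{x}^*+\mathbf{e}$. Following Dai--Milenkovic, the two-lemma structure for SP (an identification lemma bounding $\|\mathbf{x}^*_{T\setminus T'}\|_2$ after the support-enlargement step, and a projection lemma bounding the least-squares residual on the retained support using $(1-\delta_{3s})\|\mathbf{v}\|_2\leq\|\tilde{\Phi}_S\mathbf{v}\|_2$ and $\|\tilde{\Phi}_S^\top\mathbf{u}\|_2\leq\sqrt{1+\delta_{|S|}}\|\mathbf{u}\|_2$) yields a recursion $\|\mathbf{x}^*-\mathbf{x}^{(m+1)}\|_2\leq\rho\,\|\mathbf{x}^*-\mathbf{x}^{(m)}\|_2+\kappa(\delta_{3s})\|\mathbf{e}\|_2$, where the contraction factor simplifies to $\rho=\sqrt{2\delta_{3s}^2(1+\delta_{3s}^2)}/(1-\delta_{3s}^2)$, and the condition $\delta_{3s}<0.4859$ is exactly what forces $\rho<1$. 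Unrolling from $\mathbf{x}^{(0)}=\mathbf{0}$ and summing the geometric series gives $\|\mathbf{x}^*-\mathbf{x}^{(m)}\|_2\leq\rho^m\|\mathbf{x}^*\|_2+(\kappa(\delta_{3s})/(1-\rho))\|\mathbf{e}\|_2$, and dividing by $\|\mathbf{x}^*\|_2$ and substituting the bound on $\|\mathbf{e}\|_2$ reproduces the stated inequality once $\kappa(\delta_{3s})/(1-\rho)$ is identified with $\tau$.

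The main obstacle is the exact constant-tracking, i.e., showing $\kappa(\delta_{3s})/(1-\rho)=\tau$. This requires a careful decomposition of the error after each SP sweep into (i) the portion of $\mathbf{x}^*$ missed by the identification step, controlled by $\sqrt{2\delta_{3s}}/\sqrt{1-\delta_{3s}^2}$-type factors, and (ii) the projection residual on the enlarged support, controlled by a $(2\sqrt{2}+1)/(1-\delta_{3s})$-type factor; these two contributions are what yield the two summands of $\tau$ before division by $1-\rho$. I would handle this by executing the Dai--Milenkovic algebra verbatim with $\mathbf{e}$ in place of a pure additive-noise vector, and then matching terms. Everything else (the RIP inequalities, the triangle-inequality manipulations, and the geometric sum) is routine once the reduction in the first step is in place.
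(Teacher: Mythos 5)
The paper offers no proof of this lemma: it is imported wholesale from \citep{L16} (``For Subspace Pursuit algorithm, we have the following result in...''), so there is no in-paper argument to compare yours against. Your outline is the standard and, as far as it goes, correct route for perturbed-sensing-matrix guarantees of this type: the reduction $\tilde{\mathbf{y}}=\tilde{\Phi}\mathbf{x}^*+\mathbf{e}$ with $\mathbf{e}=\Delta\mathbf{y}-\Delta\Phi\,\mathbf{x}^*$ is exactly the Herman--Strohmer device; your chain $\|\mathbf{e}\|_2\leq\bigl(\sqrt{1+\delta_s}/(1-\epsilon^s_\Phi)\bigr)(\epsilon^s_\Phi+\epsilon_{\mathbf{y}})\|\mathbf{x}^*\|_2$ reproduces precisely the noise factor in the statement; and your observation that $\delta_{3s}<0.4859$ is exactly the condition $\rho<1$ checks out, since $\rho<1\iff\delta_{3s}^4+4\delta_{3s}^2-1<0\iff\delta_{3s}^2<\sqrt{5}-2$ and $\sqrt{\sqrt{5}-2}\approx 0.4859$. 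The one piece you defer --- executing the Dai--Milenkovic two-lemma recursion with $\mathbf{e}$ in place of a pure additive-noise vector and verifying $\kappa(\delta_{3s})/(1-\rho)=\tau$ --- is genuinely where all the work lies, so the proposal is an accurate plan rather than a complete proof. Be aware also that the expression for $\tau$ as printed (first summand \emph{multiplied} by $(1-\delta_{3s})(1-\rho)$ rather than divided by it) does not have the $\kappa(\delta_{3s})/(1-\rho)$ shape your derivation would produce; this is almost certainly a transcription typo in the statement, but it means exact term-matching against the lemma as written would fail at that final step.
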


\section{Implementation Details} \label{secAppendixImplementation}

\subsection{Constructing KNN Graphs}
Let $\mathbf{x}_i\in\mathbb{R}^n$ be the vectorization of an image or the feature extracted from an image, we define the following affinity matrix of the $K$-NN auxiliary graph based on Gaussian kernel: 
\[ A_{ij} = \begin{cases} 
e^{-\|\mathbf{x}_i-\mathbf{x}_j\|^2/\sigma_i\sigma_j} & \text{if} \quad \mathbf{x}_j\in NN(\mathbf{x}_i,K), \\
0 & \text{otherwise}. 
\end{cases}
\]
Note that similar construction has also appeared among others \citep{ZelnikManor04,Jacobs18,Calder20}. 
To construct high-quality graphs, we trained autoencoders to extract key features from the image data, we adopt the same parameters as \citep{Calder20} for training autoencoders to obtain these features.

The notation $NN(\mathbf{x}_i,K)$ indicates the set of $K$-nearest neighbours of $\mathbf{x}_i$, and $\sigma_i:=\|\mathbf{x}_i-\mathbf{x}^{(r)}_i\|$ where $\mathbf{x}^{(r)}_i$ is the $r$-th closest point of $\mathbf{x}_i$. Note that the above $A_{ij}$ is not necessary symmetric, so we consider $\tilde{A}_{ij}=A^T A$ for symmetrization. Alternatively, one may also want to consider $\tilde{A}=\max\{A_{ij}, A_{ji}\}$ or $\tilde{A}=(A_{ij}+A_{ji})/2$. We use $\tilde{A}$ as the input adjacency matrix for our algorithms. In our experiments, the local scaling parameters are chosen to be $K=15$, $r=10$ for all three real image datasets.

\subsection{Parameters of Synthetic Data Generating and Algorithms}
In all implementations where the LCE are applied, we sampled the seeds $\Gamma_i$ uniformly from each $C_i$ for all the implementations. We fix the rejection parameter $R=0.1$, the random walk depth $t=3$, random walk threshold parameter $\epsilon=0.8$, and the removal set parameter $\gamma=0.2$ for all experiments. 

For synthetic data, the symmetric stochastic block model consists of three equal size clusters of $n_1 = 200,400,600,800,1000$ with connection probability $p=5\log (3n_1)/(3n_1)$ and $q=\log (3n_1)/(3n_1)$. The general stochastic block model consists of clusters' sizes as $\mathbf{n}=(n_1,2n_1,5n_1)$ where $n_1$ is chosen from $200,400,600,800,1000$, and the connection probability has the matrix form 
\[
P=[P_{11},P_{12},P_{13};P_{21},P_{22},P_{23};P_{31},P_{32},P_{33}]\] 
with $P_{11}=\log^2(8n_1)/(6n_1)$, $P_{22}=\log^2(8n_1)/(12n_1)$, $P_{33}=\log^2(8n_1)/(30n_1)$, $P_{12}=P_{21}=P_{13}=P_{31}=P_{23}=P_{32}=\log(8n_1)/(6n_1)$.  In the implementation of SSLC on synthetic data, we choose the iteration number $L=60$ in the symmetric case and $L=90$ in the nonsymmetric case.

In the implementation of SSLC on FashionMNIST and CIFAR-10 (both with and w/o outliers), we choose the iteration number to be $L=50k$ where $k$ is the number of seeds, i.e., $k=1,2,3,4,5$. In the implementation of USLC on FashionMNIST and CIFAR-10 (both with and w/o outliers), we choose the iteration number to be $L=1000$, and we set $\delta=0.01$ for FashionMNIST and $\delta=0.005$ for CIFAR-10.

\subsection{Geometric Dataset} \label{secAppendixGeo}

\paragraph{Three Lines}
We generate three parallel lines in the $x$-$y$ plane defined by:
\begin{itemize}
    \item Line 1: $y = 0$ with $x \in [0, 6]$
    \item Line 2: $y = 1$ with $x \in [0, 6]$
    \item Line 3: $y = 2$ with $x \in [0, 6]$
\end{itemize}
For each line, we sample \textbf{1,200 points} uniformly at random. The embedding into $\mathbb{R}^{100}$ is performed as:
\begin{enumerate}
    \item \textbf{Zero-padding}: $(x, y) \mapsto (x, y, 0, \ldots, 0) \in \mathbb{R}^{100}$
    \item \textbf{Noise addition}: Each coordinate $z_i$ is perturbed as $z_i \leftarrow z_i + \epsilon_i$ where $\epsilon_i \sim \mathcal{N}(0, 0.15)$
\end{enumerate}

\paragraph{Three Circles}
We construct three concentric circles with:
\begin{itemize}
    \item Circle 1: Radius $r = 1.0$ (500 points)
    \item Circle 2: Radius $r = 2.4$ (1,200 points)
    \item Circle 3: Radius $r = 3.8$ (1,900 points)
\end{itemize}
Points are sampled uniformly along each circle, totaling \textbf{3,600 points}. The $\mathbb{R}^{100}$ embedding follows the same zero-padding and noise injection procedure as above.

\paragraph{Three Moons}
Three semicircular clusters are generated with:
\begin{itemize}
    \item Moon 1: Upper semicircle, radius $1.0$, centered at $(0, 0)$ (1,200 points)
    \item Moon 2: Lower semicircle, radius $1.5$, centered at $(1.5, 0.4)$ (1,200 points)
    \item Moon 3: Upper semicircle, radius $1.0$, centered at $(3, 0)$ (1,200 points)
\end{itemize}
Each dataset uses identical embedding:
\[
(x,y) \mapsto (x,y,0,\ldots,0) + \boldsymbol{\epsilon}, \quad \boldsymbol{\epsilon} \sim \mathcal{N}(\mathbf{0}, 0.15^2\mathbf{I}_{100})
\]

\begin{figure}[h]
    \centering
    \begin{tabular}{ccc}
        \includegraphics[width=0.3\linewidth]{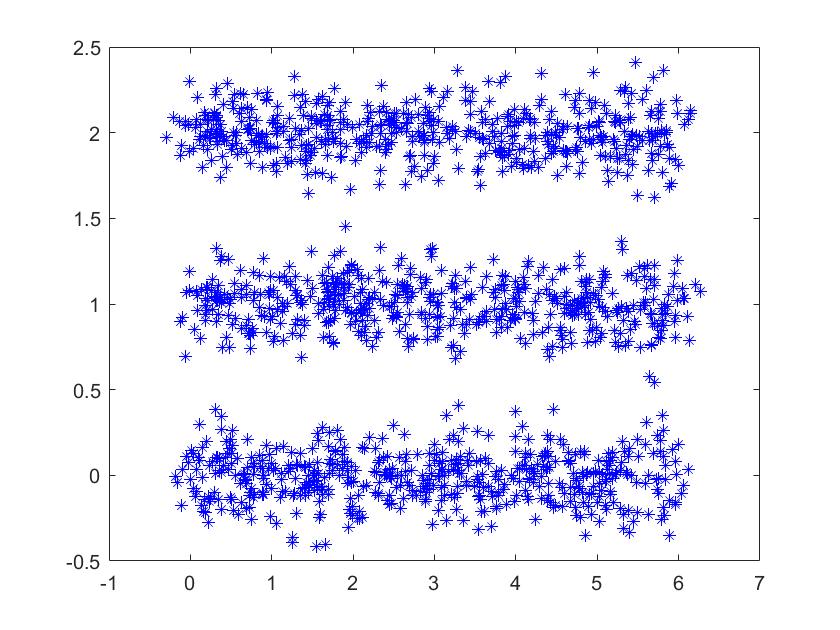}  
         & \includegraphics[width=0.3\linewidth]{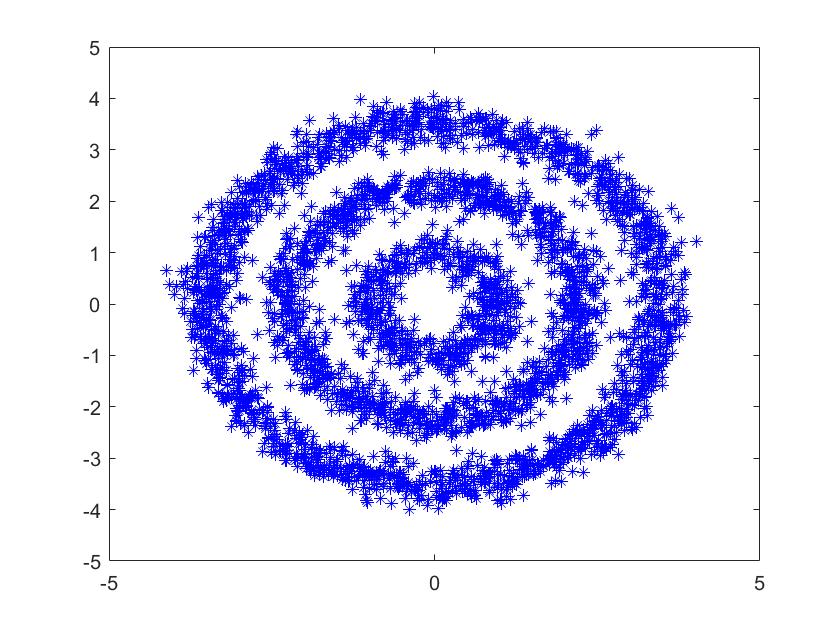} &
         \includegraphics[width=0.3\linewidth]{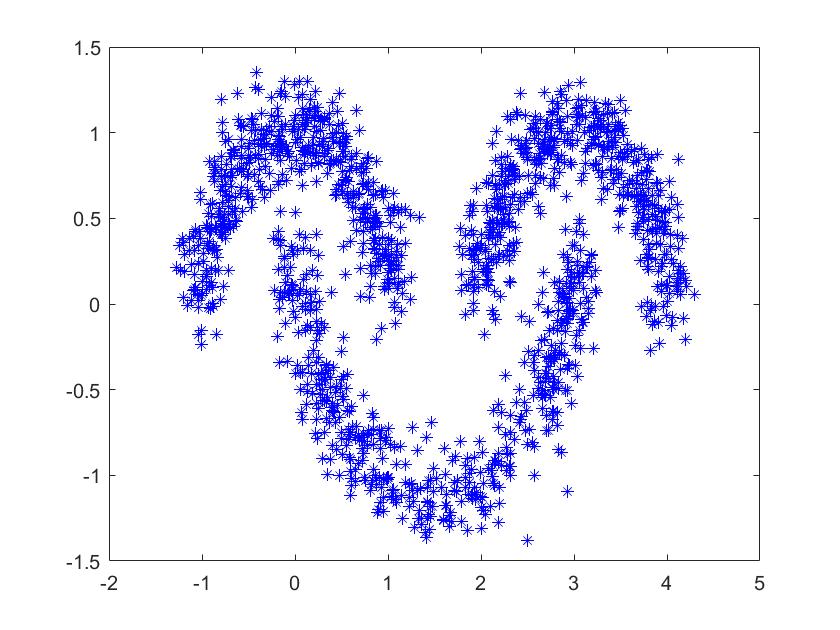}
    \end{tabular}
    \vspace{-1mm}
    \caption{2D Visulization of Geometric Dataset}
    \label{GeomData}
\end{figure}


\section{Additional Experimental Results} \label{secAppendixAddExp}

\begin{table}[h]
\centering
\caption{Jaccard Index of SSLC compared against the spectral norm of $\Delta L$ and SNR value on SSBM($n,k,p,q$) with $k=3$, $p=6\log n/n$, $q=\log n/n$.}
\label{DeltaLTable}
	\begin{tabular}{cccc}
	\toprule
           $n$ & Jaccard Index ($\%$) & Spectral norm of $\Delta L$ & SNR  \\
    \midrule
    100  & 84.59 & 0.4543 & {4.80} \\
	200  & 93.73 & 0.4238 & {5.52} \\
	
400 & 96.04 & 0.3982 & 6.24 \\
    
    800  & 99.18 & 0.3725 & 6.96 \\
	\bottomrule
	\end{tabular}
\end{table} 


\begin{table}[H]
\caption{Average accuracy and standard deviation on the largest connected subgraph over 100 trials} \label{tabPlanetoid}
\centering
\begin{tabular} {cllllll}
\toprule
 & \# Labels per class & 0 & 1  & 3 & 5 & 10  \\ 
 \midrule
 \multirow{ 4}{*}{Cora} &
Laplace (LP) & - & 21.8 (14.3) & 37.6 (12.3) & 51.3 (11.9) & 66.9 (6.8)  \\
& Poisson & - & 59.8 (7.9) & 66.2 (5.8) & 72.4 (2.1) & 74.1 (1.8) \\
& PoissonMBO & - & 59.9 (6.4) & 69.1 (3.1) & 72.4 (2.4) & 74.3 (2.1) \\
& CutSSL & - & 67.4 (3.4) & 73.2 (3.1)  & 75.8 (2.1) & \textbf{78.7 (1.1)} \\
& \textbf{SSLC/USLC} & \textbf{64.2 (5.7)} & \textbf{69.2 (4.0)} & \textbf{75.5 (3.4)} & \textbf{76.9 (2.6)} & 77.9 (1.3) \\ 
\midrule
\multirow{ 4}{*}{CiteSeer} &
Laplace (LP) & - & 27.9 (10.4) & 47.6 (8.1) & 56.0 (5.9) & 63.7 (3.5) \\
& Poisson & - &  59.4 (5.4) & 59.4 (5.4) & 62.7 (4.2) & 66.9 (1.8) \\
& PoissonMBO & - & 47.7 (8.0) & 55.7 (3.2)  & 61.0 (1.7) & 63.1 (1.7) \\
& CutSSL & - &  62.4 (4.6) & 63.4 (7.2) & 66.9 (1.4) & 68.1 (1.3) \\
& \textbf{SSLC/USLC} & \textbf{59.8 (5.7)} & \textbf{65.2 (5.3)} & \textbf{67.1 (4.7)} & \textbf{68.6 (2.9)} & \textbf{69.3 (1.6)} \\ 
\midrule
\multirow{ 4}{*}{PubMed} &
Laplace (LP) & - & 34.6 (8.8) & 35.7 (8.2)  & 36.9 (8.1) & 39.6 (9.1)  \\
& Poisson & - &  56.7 (12.8) & 66.5 (6.6) & 68.4 (5.9) & 71.2 (3.4) \\
& PoissonMBO & - & 56.9 (7.3) & 67.9 (3.4) & 69.6 (3.1) & 71.4 (2.5) \\
& CutSSL & - & 63.1 (4.7) & 70.4 (3.1) & 72.8 (2.9) & \textbf{74.1 (1.4)} \\
& \textbf{SSLC/USLC} & \textbf{61.6 (6.2)} & \textbf{67.3 (5.1)} & \textbf{71.7 (3.9)} & \textbf{73.1 (2.9)} & 73.9 (2.2)  \\ 
\bottomrule
\end{tabular}
\end{table}

\begin{figure}[h]
    \centering
    \includegraphics[width=0.4\linewidth]{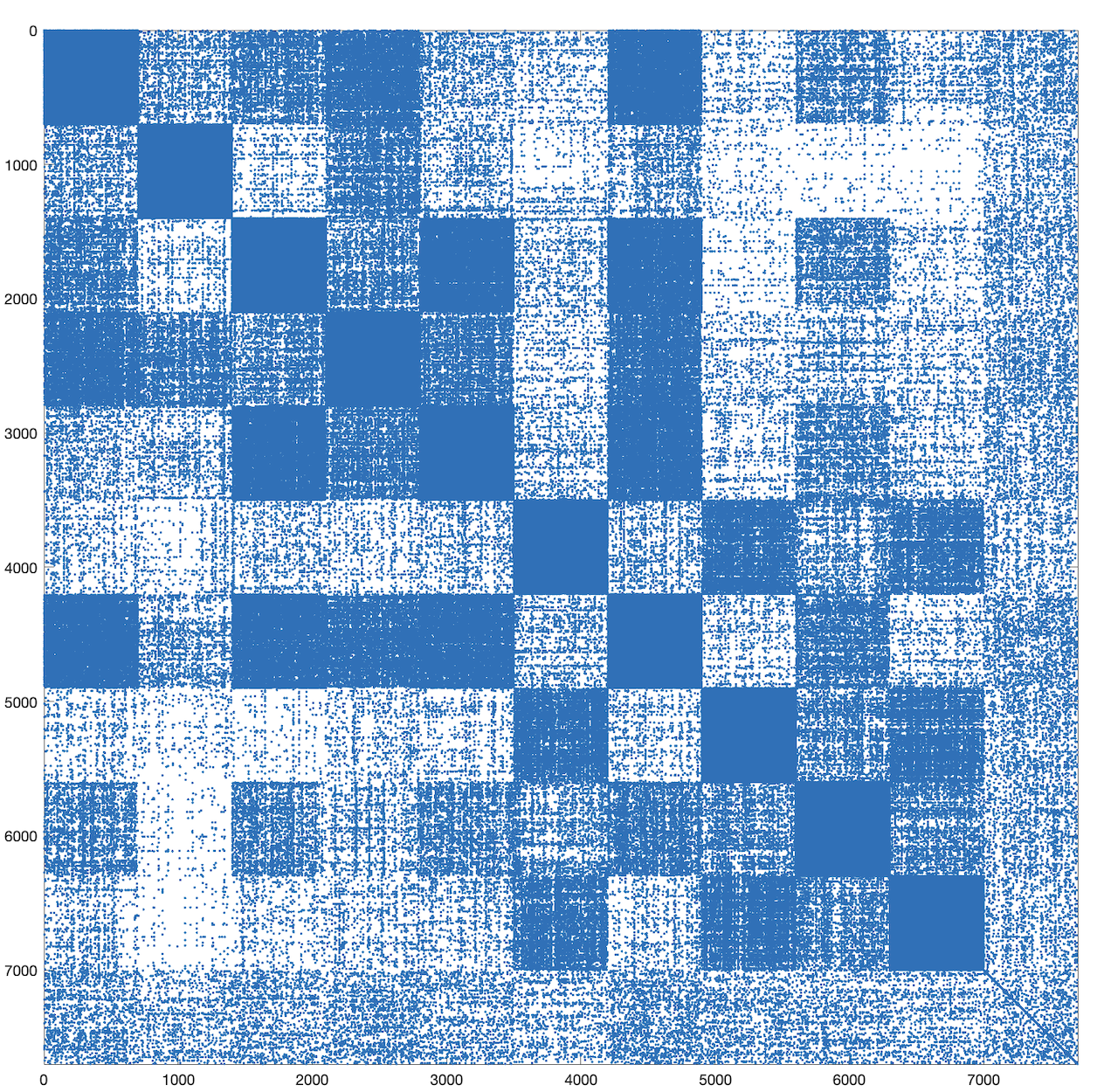}
    \caption{Affinity matrix of FashionMNIST after adding the outlier images (the last block consists of $10\%$ outliers compared to the size of original dataset)}
    \label{figOutlier}
\end{figure}

\section{Notations} \label{secAppendixNotations}

\begin{table}[H]
    \centering
    \caption{Table of Notations} \label{Notation}
    \vspace{3mm}
    \begin{tabular}{cl}
        \toprule
         Symbols & Interpretations \\
         \midrule
         $G$ & general graph of interest \\
         $E$ & set of edges of graph G \\
        $V$ & set of nodes in $G$ (size denoted by $n$) \\
        $C_s$ & each underlying true cluster \\
        $C_s^\#$ & each extracted cluster from algorithm \\
        $\Gamma_s$ & set of Seeds for each cluster \\
        $U$ & removal set from $V$ in Algorithm~\ref{alg1}\\
        $G^{in}$ & subgraph of $G$ on $V$ with edge set $E^{in}$ \\
        $G^{out}$ & subgraph of $G$ on $V$ with edge set $E^{out}$ \\
        $E^{in}$ & subset of $E$ which consists only intra-connection edges \\
        $E^{out}$ & the complement of $E^{in}$ within $E$ \\
        
        $A \text{ } (A^{in})$ & adjacency matrix of $G \text{ } (G^{in})$ \\
         $L \text{ } (L^{in})$ & random walk Laplacian matrix of $G \text{ } (G^{in})$ \\
        $L_{C} \text{ } (L^{in}_C)$ & submatrix of $L \text{ } (L^{in})$ with column indices $C\subset V$ \\
        $\ell_i \text{ } (\ell_i^{in})$ & $i$-th column of $L \text{ } (L^{in})$ \\
        $L^{in}_{\Omega}$ & submatrix of $L^{in}$ with column indices $\Omega\subset V$ \\
        $|M|$  & entrywised absolute value operation on matrix $M$ \\ 
        $\|M\|_2$ & $\|\cdot\|_2$ norm of matrix $M$ \\
        $|\mathbf{v}|$ & entrywised absolute value operation on vector $\mathbf{v}$ \\
        $\|\mathbf{v}\|_2$ & $\|\cdot\|_2$ norm of vector $\mathbf{v}$.
         \\
         $\mathbf{1}_{C}$  & indicator vector on subset $C\subset V$ \\
         $\triangle$ & set symmetric difference \\
         $\mathcal{L}_s(\mathbf{v})$ & $\{i\in[n]: \text{$v_i$ among $s$ largest-in-magnitude entries in $\mathbf{v}$}\}$ \\
       \bottomrule
    \end{tabular}
\end{table}

\end{document}